\documentclass[12pt]{article}

\usepackage{arxiv}        

\usepackage[dvipdfmx]{graphicx}
\usepackage{enumitem}
\usepackage{amsmath}
\usepackage{amssymb}
\usepackage{cite}
\usepackage{subfigure}
\usepackage{algorithm}
\usepackage{algorithmic}
\usepackage{bm}
\usepackage{bbm}
\usepackage{url}
\usepackage{mathtools} 
\mathtoolsset{
  showonlyrefs,
  showmanualtags,
}

\allowdisplaybreaks  % 
\newcommand{\bt}{\bm{\theta}}
\newcommand{\bx}{\bm{x}}

\newcommand{\dd}{\mathrm{d}}

\newcommand{\out}{\mathrm{out}}

\newcommand{\bR}{\mathbb{R}}

\newcommand{\cY}{\mathcal{Y}}
\newcommand{\cX}{\mathcal{X}}

\newcommand{\cH}{\mathcal{H}}

\newcommand{\cP}{\mathcal{P}}

\newcommand{\argmax}{\mathop{\rm arg~max}\limits}
\newcommand{\argmin}{\mathop{\rm arg~min}\limits}

\newcommand{\lan}{\langle}
\newcommand{\ran}{\rangle}

\newtheorem{theo}{Theorem}%[section]
\newtheorem{prop}[theo]{Proposition}
%[section]
\newtheorem{proof}{Proof}
\newtheorem{ex}{Example}
\newtheorem{defi}{Definition}%[section]

\newcommand{\appropto}{\mathrel{\vcenter{
  \offinterlineskip\halign{\hfil$##$\cr
    \propto\cr\noalign{\kern2pt}\sim\cr\noalign{\kern-2pt}}}}}

\title{Active Learning by Query by Committee with Robust Divergences}

\author{Hideitsu Hino\\
            The Institute of Statistical Mathematics, Tokyo 190-8565, Japan\\
            RIKEN AIP, Tokyo 103-0027, Japan\\
  \texttt{hino@ism.ac.jp}
\And
        Shinto Eguchi\\
                    The Institute of Statistical Mathematics, Tokyo 190-8565, Japan
              %  \\  
}

\begin{document}
\maketitle

\maketitle

\begin{abstract}
Active learning is a widely used methodology for various problems with high measurement costs. In active learning, the next object to be measured is selected by an acquisition function, and measurements are performed sequentially. The query by committee is a well-known acquisition function. In conventional methods, committee disagreement is quantified by the  Kullback--Leibler divergence. In this paper, the measure of disagreement is defined by the Bregman divergence, which includes the Kullback--Leibler divergence as an instance, and the dual $\gamma$-power divergence. As a particular class of the Bregman divergence, the $\beta$-divergence is considered. By deriving the influence function, we show that the proposed method using $\beta$-divergence and dual $\gamma$-power divergence are more robust than the conventional method in which the measure of disagreement is defined by the Kullback--Leibler divergence. Experimental results show that the proposed method performs as well as or better than the conventional method.
\keywords{Information Geometry, Bregman Divergence, Power Divergence, Active Learning, Query By Committee, Robust Statistics}
% \PACS{PACS code1 \and PACS code2 \and more}
% \subclass{MSC code1 \and MSC code2 \and more}
\end{abstract}

\section{Introduction}
Supervised learning, a typical machine learning problem set-up, is the problem of approximating the input--output correspondence given a large number of input and output data pairs in advance. The greater the number of input-output data available for model construction, the more accurately the input-output relationship can be approximated, but the cost of obtaining appropriate outputs for the inputs can be significant. For example, if the environment in which crops are grown (e.g., the average temperature for each month, type and amount of fertilizer to be administered, and weather conditions) is used as the input and property (e.g., sugar content) of a particular crop as the output, it will take several months to several years to obtain the output for a particular input. In another example, researchers have very limited time to use the synchrotron radiation experimental facilities where the X-ray spectrum measurement experiments described below are conducted. Although it is possible to plan particular measurements, which are considered as the input in this case, the cost of obtaining the corresponding output is high, and it is necessary to consider ways to extract the maximum amount of information with as few measurements as possible. There is a methodology called experimental design~\cite{box2005statistics}, which is a technique to carefully design the types and values of input variables and the number of measurements required before conducting an experiment.
On the other hand, when a certain amount of data (input--output pairs) has already been observed and a predictive model has been built using it, the methodology to automatically select the next sample to annotate to maximally improve the predictive accuracy is called active learning~\cite{Settles2010,DBLP:journals/corr/abs-2012-04225}. It is theoretically known that the probability of an incorrect prediction of the response variable for an unknown input (generalization error) can be reduced by appropriately selecting the examples (samples) to be used to train the predictor through active learning. Active learning is widely used in practical applications~\cite{Ueno2018,Terayama2019}, and theoretical analysis has also been conducted~\cite{Balcan2006,Dasgupta2005,AwastiACM17,DBLP:conf/aistats/IshibashiH20}.

Information geometry is a methodology that treats parametric models of probability distributions with a geometric approach~\cite{amari2012differential}. It enables the analysis of statistical inference problems using the tools of differential geometry, and it is used to elucidate the structure of information not only in statistics, but also in various other fields~\cite{10.5555/3019383}. Divergence functions, which quantify the degree of discrepancy between probability distributions, play an essential role in information geometry~\cite{KE2022}, and remarkable results in robust statistics have been obtained, for example, through information geometric analysis with a particular class of divergence functions~\cite{10.1016/j.jmva.2008.02.004}. As a complementary approach to conventional theoretical development and practical algorithms, we consider an active learning algorithm from the viewpoint of information geometry. We consider a sequential active learning problem with a particular acquisition function and a generalized linear model. An intuitively comprehensive picture of the sequential sample selection procedure is provided by considering an input vector as an element of the (algebraic) dual space to the parameter space. On the basis of the geometric formulation of sample selection for an active learning algorithm, robust variants of the selection procedure with a $\beta$ divergence and dual $\gamma$-power divergence are proposed. Proofs for theorems and propositions are deferred to the appendix section for the sake of readability. 

\section{Active Learning}
In this section, the setup of active learning is introduced, and an information geometric perspective of the procedure of selecting a new sample to be measured is considered.

Let $X \in \cX \subseteq \bR^{d}$ be an input variable and $Y \in \cY$ be an output variable, where $\cY$ is a subset of $\bR$ for regression problems and is $\{+1,-1\}$ for discrimination problems. The realizations of the random variables $X$ and $Y$ are denoted as $\bx$ and $y$, respectively. The function $h : \cX \to \cY$, which predicts the response variable from the explanatory variable, is called a hypothesis or predictor. The set of all possible hypotheses is represented by $\cH$. 

The probability density function $p(y|\xi)$ or $p(y|\bm{x};\bt)$ is sometimes abbreviated by its parameter $\xi$ or $\bt$. Let $\mathcal{P}$ be the space of all Radon--Nikod\'{y}m derivatives with a common support, which are dominated by a $\sigma$-finite measure $\Lambda$ on $\mathbb{R}^{d}$. We typically consider most cases where $\Lambda$ is fixed by the Lebesgue measure or the counting measure so that $\mathcal{P}$ is the space of all probability density functions or that of all probability mass functions.

\subsection{Sequential Observation for Generalized Linear Model}
Suppose we have a small initial training dataset $S_0 = \{(\bm{x}_i, y_i)\}$, and the initial predictive model $y = h_0(\bm{x})$ is trained with $S_0$. 
In active learning, a {\it{learner}} is supposed to select a sample $\bx$ for which the value of the corresponding output variable $y$ is unknown by some criteria, thereby obtaining the value of $y$. The function that returns the value of the explanatory variable for $\bx$ is often called the {\it{oracle}}. In statistics, for most sequential designs, a setting is assumed in which observation points can be freely selected according to some standard; this is called membership query synthesis in the context of active learning~\cite{Angluin1988}. On the other hand, in the literature on active learning, it is often assumed that the learner has access to a set of pooled unlabeled samples denoted by $\cX_p$ and selects one sample from $\cX_p$ in one iteration of the active learning procedure on the basis of the value of an {\it{acquisition function}} $a(\bm{x})$. Then, the output value $y$ for the chosen sample $\bx$ is measured, and the dataset for learning the predictive model is updated as $S_{t+1} = S_{t} \cup \{(\bx,y)\}$. We follow this problem setting. 

Given a set of input and output pairs $S_t = \{ (\bm{x}_i,y_i)\}_{i = 1,\dots,n_t}$, we define $X_t = (\bm{x}_1,\dots,\bm{x}_{n_t})^{\top} \in \mathbb{R}^{n_t \times d}$ as the design matrix and a vector of realizations of output $\bm{y}=(y_1,\dots,y_{n_t})$. We consider the joint distribution $p(\bm{y} | X_t, \bm{\xi})$ of $\bm{y}$ parameterized by $\bm{\xi} \in \bR^m$ and estimate $\bm{\xi}$ by the maximum likelihood method.
An exponential family is a broad class of statistical models, which includes Gaussian distribution and Poisson distribution, for example. A generalized linear model (GLM;~\cite{10.1214/ss/1177013604}) considers that each output $y$ is assumed to be generated from a particular distribution in an exponential family. Statistical inference on the GLM is investigated from the viewpoint of information geometry~\cite{HiroseKomaki2010,EGUCHI202115}. There are several equivalent representations for GLM, and we adopt the following form:
\begin{align}
    p(y|\xi(\bm{x})) = &
    \exp
    \left(
    \frac{ y \xi(\bm{x}) - \psi(\xi(\bm{x}))}{\phi} + c(y,\phi)
    \right),
    \label{eq:GLM}
\end{align}
where $\phi$ is a dispersion parameter and is assumed to be known in this paper. The function $\psi(\xi(\bx))$ is the cumulant generating function. 
Here, $\xi \in \mathbb{R}$ is the canonical parameter of the distribution, but in the framework of the generalized linear model, we further introduce the linear predictor $\xi(\bx) = h(\bx;\bt) = \lan \bt, \bx \ran$ and consider $\bt$ as the target of estimation, where $\lan \; \cdot \;, \; \cdot \; \ran$ is the Euclidean inner product, and accordingly, $p(y|\xi(\bx)) = p(y|\bx;\bt)$. 
As a basic requirement, $\xi$ is in $(-\infty,\infty)$, so that the linear model $\xi = \lan \bt,\bx \ran$ is always well-defined for any regression parameter $\bt \in \bR^{d}$.

\begin{ex}{Multiple Regression}\\
In the standard multiple linear regression in which the output $y$ follows a Gaussian distribution with the mean $\xi$ and variance $\sigma^2$, $\phi = \sigma^2$, and $c(y,\phi) = - \frac{y^2}{\phi} - \log (2\pi \phi)^{1/2}$. The cumulant generating function is $\psi(\xi) = \frac{1}{2}\xi^2$, and the probability density function is
\begin{align}
\begin{aligned}
    p(y|\xi(\bx)) = &
    \frac{1}{\sqrt{2\pi \sigma^2}}
    \exp 
    \left(
    - \frac{1}{2 \sigma^2} (y-\xi)^2
    \right)\\    =&
    \exp 
    \left(
    \frac{y \lan \bt,\bx \ran - \frac{1}{2} \lan \bt,\bx \ran^2}{\phi} - \frac{y^2}{2 \phi} - \log (2\pi \phi)^{1/2}
    \right)
    .
    \label{eq:normalPDF}
\end{aligned}
\end{align}
\end{ex}

\begin{ex}{Logistic Regression}\\
In logistic regression, the output $y$ follows a binomial distribution with parameters $\xi$, $\phi = 1$, and $c(y,\phi) = 0$. The cumulant generating function is $\psi(\xi) = \log (1+e^{\xi})$; hence, the probability function is 
\begin{align}
    p(y|\xi(\bx)) = \exp( y \xi - \log (1+e^{\xi})).
    \label{eq:logPMF}
\end{align}
\end{ex}

Since the generalized linear model~\eqref{eq:GLM} is determined by design 
$X_{t-1}$ and parameter $\bt$, the model manifold is denoted as
\begin{align}
    M_{X_{t-1}} = 
    \{
    p(\bm{y}|X_{t-1};\bt) 
    \}.
\end{align}

In the framework of active learning, we assume, by using $\bm{y}$ and $ X_{t-1}$, that an estimate of parameter $\hat{\bm{\theta}}_{t-1}$ has been obtained. Then, by using the information contained in $\bm{y}, X_{t-1}$ and $\hat{\bt}_{t-1}$, we explore a point $\bx_{t} \in \cX_p$. Considering the fact that the canonical parameter for the generalized linear model $\xi$ is expressed by $\lan \bm{\theta}, \bm{x} \ran = \xi (\bm{x})$, we consider fixing $\bt = \hat{\bt}_{t-1}$ and $\bx$ as the element of the algebraic dual space of $\bt \in \Theta$. Then, the problem of exploring the sample point $\bx \in \cX_p$ in active learning is considered as the problem of finding $\bx$ by maximizing an acquisition function characterized by $p(y|\bx; \hat{\bt}_{t-1})$.
The dimensionality of joint distribution is $t-1$ in $M_{X_{t-1}}$ and $t$ in $M_{X_t}$, while the dimensionality of parameter $\bt$ remains the same. The parameter $\bt$ specifies a point of the model manifold $M_{X_{t}}$ or $M_{X_{t-1}}$. 
From this perspective, the updated dataset $S_{t} = S_{t-1} \cup \{(\bx_{t},y_t)\}$ in the process of active learning is regarded as the extension of the model space $M_{X_{t-1}}$ to $M_{X_{t}}$ as schematically depicted in Figure~\ref{fig:Ext}.
We consider an active learning problem that explores an additional observation point $\bt_{t}$ based on the current model parameter $\hat{\bt}_{t-1}$. 
In Fig.1, the dotted arrow connects the different model spaces, and the solid arrow connects model parameters within the same model space $M_{X_t}$. The cross mark specifies a point in $M_{X_{t-1}}$ while open circles specify points in $M_{X_{t}}$. 
Note that the parameter $\hat{\bt}_{t-1}$ corresponds to a point obtained by MLE with $S_{t-1}$ in $M_{X_{t-1}}$. Additional observation $(\bt_t,y_t)$ defines an updated model space $M_{X_t}$. The parameter $\hat{\bt}_{t-1}$ specifies a point in the updated model space $M_{X_t}$, but it is not the parameter $\hat{\bt}_{t}$ obtained by MLE using the updated dataset $S_t$ in general.

\begin{figure}[ht]
\centering
  \includegraphics[width=60mm]{./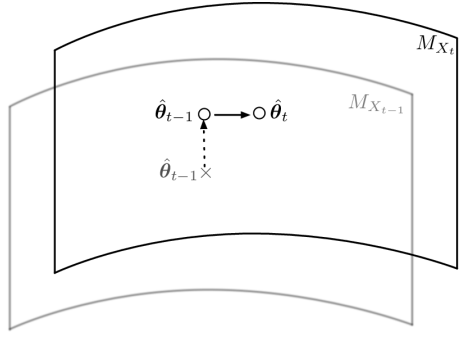}
  \caption{The model space $M_{X_{t-1}}$ corresponding to the design $X_{t-1}$ up to the step $t-1$ of active learning, and the extended space $M_{X_t}$ based on the new observation $\bx_t$.} 
  \label{fig:Ext}
\end{figure}

\subsection{Acquisition Function}
The design of the acquisition function is one of the central issues in active learning studies.  Several acquisition functions aim to quantify the difficulty of label prediction in some way and actively incorporate difficult-to-predict samples into learning. This approach of designing acquisition functions is called the uncertainty-based approach.
As another approach, it is reasonable to sample inputs so as to reflect the distribution of explanatory variables, and methods that are based on the idea of annotating representative samples~\cite{Nguyen2004,Sener2018} have been proposed.
In addition, several methods have recently been proposed to learn acquisition functions according to the environment and data~\cite{Konyushkova2017,Haussmann2019,DBLP:journals/npl/TaguchiHK21}.

In this work, we consider the uncertain-based approach. In particular, we adopt a simple and intuitive method for quantifying the uncertainty called the query by committee (QBC).

\subsection{Query by Committee}
\label{sec:QBC}
One of the criteria for selecting new observations in active learning is the query by committee (QBC;~\cite{Seung1992}). This is an approach that selects the sample on which there will be the most disagreement in a sense of a consensus of multiple predictive models. 
Each time a new sample $\bx \in \cX_p$ or query is issued, {\it{committee members}} vote on the response $y$ for the query $\bx$.
Various methods have been proposed and discussed in relation to ensemble learning~\cite{Freund1997} and the resulting reduction of version space~\cite{Gilad-Bachrach2005}. As a simple representative method, the following procedure is proposed in \cite{10.5555/645527.757765}:
\begin{enumerate}
    \item Learn $C$ predictive models with different parameters $\bt_{c,t-1},c=1,\dots,C$ by, for example, Bagging~\cite{breiman96}.
    \item Select a sample from the pool as $\bx_t = \argmax_{\bx \in \cX_p} a(\bx)$ by using the acquisition function
    \begin{align} 
        a_0(\bx) =\sum_{c=1}^{C} w_c D_0(p(Y|\xi_{c,t-1}(\bx)), p(Y|\bar{\xi}(\bx))),
        \label{eq:ac}
    \end{align}
    where $D_0$ is the Kullback--Leibler (KL) divergence, and $\bar{\xi}(\bx)$ is the {\it{consensus model parameter}} defined later, and $\xi_{c,t-1}(\bx) = \lan \bt_{c,t-1}, \bx \ran$. 
    Measure the response $y$ corresponding to the selected $\bx_t$ and denote it as $y_t$. 
    \item Update the training dataset $S_{t} = S_{t-1} \cup \{(\bx_{t},y_{t})\}$.
\end{enumerate}
In Eq.~\eqref{eq:ac}, the divergences are mixed with the mixing weight $w_c, c=1,\dots, C$ where $w_c \geq 0$ and $\sum_{c=1}^{C} w_c = 1$. The weight $w=(w_1,\dots,w_C)$ reflects the reliability of committee members, and can be fixed in advance or determined during the learning procedure for a committee member. In this work, we consider $w_c = 1/C$ and fixed throughout the active learning process.

The consensus model parameter $\bar{\xi}(\bx)$ is defined by the minimizer \begin{align}
    \bar{\xi}(\bx) = \argmin_{\xi(\bx)}
    \sum_{c=1}^{C}
    w_c 
    D_{0}(
    p(Y|\xi(\bx)),
    p(Y|\xi_c(\bx)))
    \label{eq:consKL}
\end{align}
of the weighted sum of $e$-projections from committee models $p(y|\xi_c(\bx)) = p(y|\bx;\bt_c)$ as shown in Figure~\ref{fig:consKL}. The minimizer~\eqref{eq:consKL} is called the $e$-mixture of models~\cite{716791,DBLP:journals/neco/TakanoHAM16}.
\begin{figure}[ht]
\centering
  \includegraphics[width=60mm]{./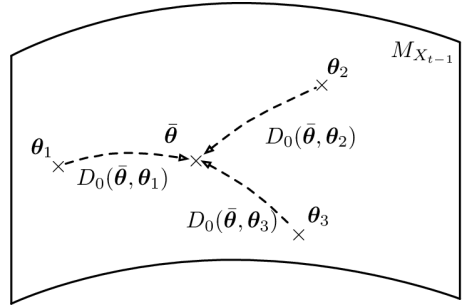}
  \caption{Consensus model defined by a minimum of summation of $e$-mixtures. The parameter $\xi$ of exponential family is specified via the parameter $\bt$ of the linear predictor. Dashed arrows indicate $e$-projections.}
  \label{fig:consKL}
\end{figure}

The consensus model is explicitly obtained by minimizing
\begin{align}
    \notag
    J(\xi) =&
    \sum_{c=1}^{C} w_c D_{0}(
    p(Y|\xi(\bx)),
    p(Y|\xi_c(\bx)))
    =
    \sum_{c=1}^{C} w_c \mathbb{E}_{\xi}
\left[ 
\frac{ 
Y \xi - Y \xi_c - \psi(\xi) + \psi(\xi_c)
}{
\phi
}
\right]\\ 
=&
\frac{1}{\phi} 
\sum_{c=1}^{C} w_c \{
\psi^{\prime}(\xi) \xi - \psi^{\prime}(\xi) \xi_c - \psi(\xi) + \psi(\xi_c)
\},
\end{align}
where we used $\psi^{\prime}(\xi) = \mathbb{E}_{\xi}[Y]$.
Then, solving $ \frac{\partial J(\xi)}{\partial \xi} =
\psi^{\prime \prime} (\xi)\sum_{c=1}^{C} w_c(
\xi - \xi_c)
=0$, we have $\bar{\xi} = \sum_{c=1}^{C} w_c \xi_c$.

The acquisition function in QBC is defined by the summation of KL divergences from models $p(y|\xi_c(\bx))$ to the consensus model $p(y|\bar{\xi}(\bx))$ as depicted in Figure~\ref{fig:acFunc}.
Intuitively, the query with the most split votes would be worth querying the oracle because of the high uncertainty. To quantify the diversity of this vote, the divergence from the average model is considered. When the sum of the divergences from the mean is large, the individual committee member's disagreement is considered to be large. In the original work~\cite{10.5555/645527.757765}, the acquisition function~\eqref{eq:ac} is defined as the sum of the KL-divergences from committee members to the consensus model~\eqref{eq:consKL}. It is also possible to consider the sum of the divergences from the consensus model to the committee members, which is used for defining the consensus model. In this paper, we only consider the case~\eqref{eq:consKL} following the original definition in~\cite{10.5555/645527.757765}, but the sum of the divergences from the consensus model to the committee members gives similar results.
\begin{figure}[ht]
\centering
  \includegraphics[width=100mm]{./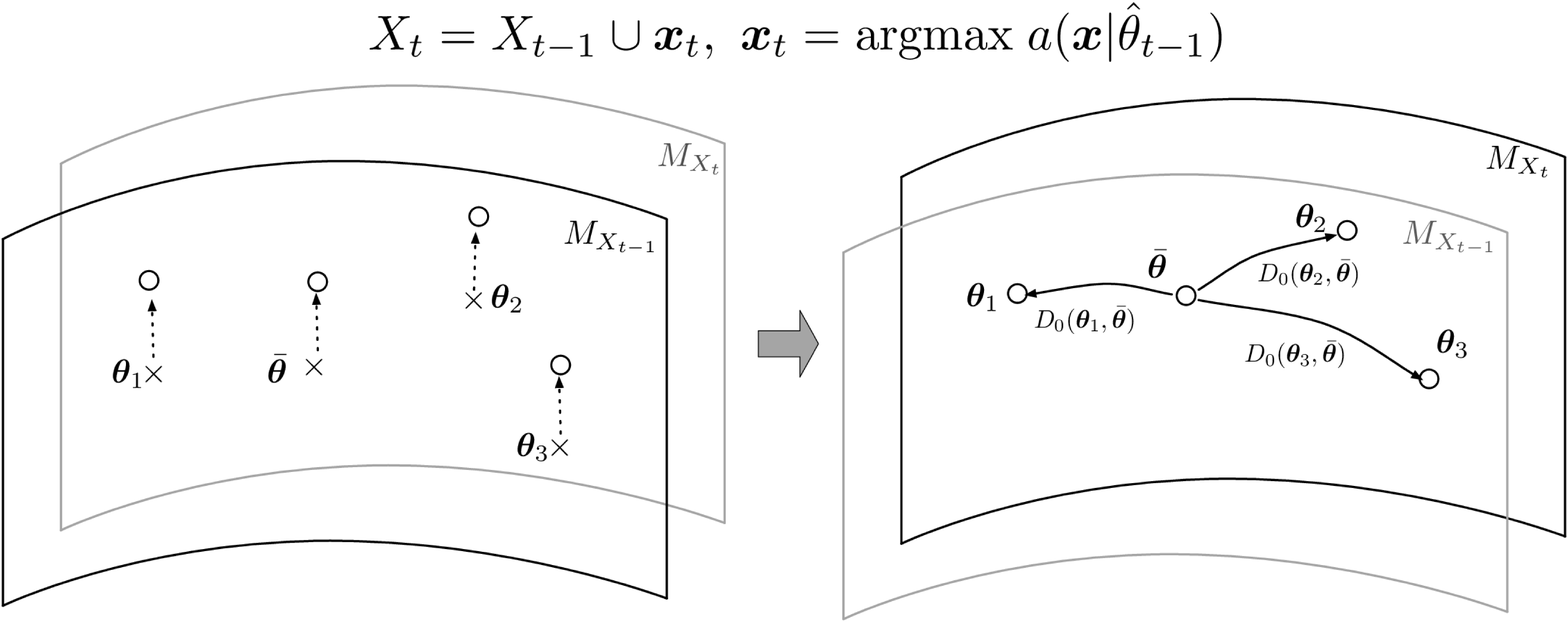}
  \caption{The procedure corresponds to the acquisition function in QBC active learning. A new sample $x_t$ is selected by maximizing KL divergences from the consensus model to committee models.}
  \label{fig:acFunc}
\end{figure}

Note that the maximization of Eq.~\eqref{eq:ac} is often not a well-defined optimization problem when $\bx$ freely takes any value in $\cX$. In fact, the domain of $\bx$ is a set of candidate measurement points $\cX_p$ given in advance, the so-called pool, and maximization is carried out by evaluating all the elements of this set. Therefore, there is no need for mathematical optimization, and the maximization is simply carried out by evaluating the acquisition function $a_0(\bx)$.

The consensus model and the acquisition function are originally defined on the basis of the KL divergence, which is vulnerable to outliers. In the QBC procedure explained above, committee models are trained by using a small amount of training data. If we use Bagging to construct various committee members, the situation would be worse, and it is highly possible that some of the committee members behave as outliers. To alleviate this problem, we consider using robust divergence measures instead of a standard KL divergence for computing the consensus model and defining the acquisition function.

%%%%%%%%%%%%%%%%%%%%%%%%%%%
\section{Divergence Functions}
Divergence function is an index to measure the discrepancy between two probability density functions. It plays a central role in integrating statistics, information theory, statistical physics, and machine learning with many other fields. The most popular divergence function is the KL divergence. In this section, we consider two classes of alternative divergences.

\subsection{Bregman divergence}
Let $U$ be a monotonically increasing convex function on $\bR$ and $u$ be the derivative of $U$. We define $U^{\ast}(\zeta) = \sup_{z \in \mathbb{R}} \{z \zeta -U(z)\}$, that is, the Legendre transform of $U$, and $u^{\ast} = u^{-1}$ as the derivative of $U^{\ast}$. We consider transforming the function $f$ by $u^{\ast}(f)$, and denote the transformed function as $\breve{f} = u^{\ast}(f)$, which is called the $u$-representation of the function $f$. Then, the Bregman potential between two functions $f$ and $g$ is defined as
\begin{align}
    d_U(f,g) = U^{\ast}(f) + U(\breve{g}) - f \breve{g},
\end{align}
and the Bregman divergence~\cite{BREGMAN1967200} is defined as
\begin{align}
    D_{U}(p,q) = 
    \int d_{U}(p(y),q(y)) \dd \Lambda (y) = \int d_{U}(p,q) \dd \Lambda,
\end{align}
where $p$ and $q$ are the probability density or probability mass functions. 
Note that we omit the integral variable $y$ for notational simplicity. Then, the $u$-cross entropy and $u$-entropy are defined as
\begin{align}
    C_U(p,q) =&
    \int U(\breve{q}) \dd \Lambda - \int p \breve{q} \dd \Lambda,
    \\
    H_U(p) =&
    \int U(\breve{p}) \dd \Lambda - \int p \breve{p} \dd \Lambda,
\end{align}
respectively. 
Using these entropies, we define the Bregman divergence or the $u$-divergence from $p$ to $q$ as
\begin{align}
    D_U(p,q) = C_U(p,q) - H_U(p).
\end{align}
The most popular convex function $U$ and its related functions for the Bregman divergence would be the exponential function, which leads to the Kullback--Leibler divergence where
\begin{align}
\begin{aligned}
    U(z) &= \exp(z), 
    & U^{\ast}(\zeta) &= 
    \zeta ( \log \zeta -1 ),
\\
    u(z) &= \exp(z),&
    u^{\ast}(\zeta) &= \log \zeta.
\end{aligned}
\end{align}

The Euclidean distance is recovered with
\begin{align}
    \begin{aligned}
        U(z) &= \frac{1}{2}z^2, & U^{\ast}(\zeta) &= \frac{1}{2}\zeta^2,\\
        u(z) &= z, & u^{\ast}(\zeta) &= \zeta.
    \end{aligned}
\end{align}

Other important examples include the $\eta$-type with $\eta \geq 0$
\begin{align}
\begin{aligned}
    U(z) &= \exp(z) + \eta z, &
    U^{\ast}(\zeta) &= (\zeta - \eta) \{ \log (\zeta - \eta) +1\},\\
    u(z) &= \exp(z) + \eta, &
    u^{\ast}(\zeta) &= \log (\zeta - \eta),
\end{aligned}
\end{align}
and the $\beta$-type with $\beta \geq 0$
\begin{align}
\begin{aligned}
    U(z) &= \frac{1}{\beta+1} (\beta z + 1)^{\frac{\beta+1}{\beta}}, 
    & U^{\ast}(\zeta) &= 
    \frac{\zeta^{\beta+1}}{\beta(\beta+1)} - \frac{\zeta}{\beta},\\
    u(z) &= (\beta z + 1)^{1/\beta},&
    u^{\ast}(\zeta) &= \frac{\zeta^{\beta} -1}{\beta}.
\end{aligned}
\end{align}
Both the $\eta$-type and $\beta$-type functions lead to robust estimators. In this work, we concentrate on the $\beta$-type and only consider the $\beta$-divergence
\begin{align} \notag
    D_{\beta}(p,q)
    =&
    \frac{1}{\beta+1} \int q^{\beta+1} \dd \Lambda - 
    \frac{1}{\beta +1} \int p^{\beta+1} \dd \Lambda -
    \frac{1}{\beta} \int p (q^{\beta} - p^{\beta}) \dd\Lambda\\
    =&
    {\frac{1}{\beta(1+\beta)}}\int p^{\beta+1}{\dd}\Lambda
- \frac{1}{\beta}{\int p q^\beta \dd \Lambda}
+{\frac{1}{\beta+1}}\int q^{\beta+1} \dd \Lambda.
\label{eq:beta-div}
\end{align}
as an instance of the Bregman divergence.

\subsection{Dual $\gamma$-power Divergence}
In \cite{10.1093/biomet/85.3.549}, it is proposed that a class of power divergences, and discussed a robust parameter estimation on the basis of this class. It is shown to be robust to outliers, and its relationship with the pseudo-spherical score was investigated in~\cite{10.1016/j.jmva.2008.02.004,10.3150/13-BEJ557,10.1093/biomet/asv014}.

The standard $\gamma$-power divergence is given by
\begin{eqnarray}
    D_{\gamma}(p, q)=- \frac{\int p q^\gamma \dd \Lambda}{(\int q^{\gamma+1} \dd \Lambda)^{\frac{\gamma}{\gamma+1}}}+\Big(\int p^{\gamma+1} \dd \Lambda\Big)^{\frac{1}{\gamma+1}},
\end{eqnarray}
which satisfies the scale invariance for the second argument as $D_{\gamma}(p, zq)=D_{\gamma}(p, q)$ for any constant $z>0$.

Alternatively, we consider a dual $\gamma$ power divergence as
\begin{align}
\label{gamma-star}
    D^*_{\gamma}(p, q)=- \frac{\int p q^\gamma{\rm d}\Lambda}{(\int p^{\gamma+1} \dd \Lambda)^{\frac{1}{\gamma+1}}}+\Big(\int q^{\gamma+1} \dd \Lambda\Big)^{\frac{\gamma}{\gamma+1}}
\end{align}
for $p$ and $q$ of $\cP$.
Note that $D^*_{\gamma}(zp, q)= D^*_{\gamma}(p, q)$ for any constant $z>0$.
This implies that, if $p$ and $q$ are density functions with a finite mass, then $  D^*_{\gamma}(p, q) \geq0$
with equality $p=zq$.
On the other hand, if $p$ and $q$ are in $\cal P$, then $  D^*_{\gamma}(p, q) =0$ means $p=q$.
It is worth noting that the dual $\gamma$-power divergence is closely related to the $\beta$-divergence defined in Eq.~\eqref{eq:beta-div}. 
Consider a scale adjustment as $\min_{v>0}D_{\beta}(vp, q)$. We observe that
\begin{align}
    \frac{\partial}{\partial v}D_{\beta}(vp, q)={\frac{v^{\beta}}{\beta}}\int p^{\beta+1} \dd \Lambda
- \frac{1}{\beta}{\int p q^\beta \dd \Lambda},
\end{align}
in which the minimizer is given by
$  v^*=\big( \frac{\int p q^\beta \dd \Lambda}
{\int p^{\beta+1} \dd \Lambda}\big)^{\frac{1}{\beta}},$
so that we have the minimum 
\begin{align}
\label{min}
 D_{\beta}(v^*p, q)=
- \frac{1}{\beta+1}\bigg[\frac{\int p q^\beta \dd \Lambda}{\big(\int p ^{\beta+1} \dd \Lambda\big)^{\frac{1}{\beta+1}}}\bigg]^{\frac{\beta+1}{\beta}}
+{\frac{1}{\beta+1}}\int q^{\beta+1} \dd \Lambda.
\end{align}
We note that this divergence is scale-invariant for the first argument.
By a power transformation for two terms on the right side of Eq.~\eqref{min}, we obtain
\begin{align}
\label{dif}
 \bigg[\int q^{\beta+1} \dd \Lambda\bigg]^{\frac{\beta}{\beta+1}}
\geq \frac{\int p q^\beta \dd \Lambda}{\big(\int p ^{\beta+1} \dd \Lambda\big)^{\frac{1}{\beta+1}}}.
\end{align}
Accordingly, we get $D^*_\gamma(p,q)$
by the difference between both sides of \eqref{dif} if $\beta=\gamma$.

\section{Consensus model defined by robust divergences}
The consensus model used in QBC is the model with parameter $\bar{\xi}$ defined in Eq.~\eqref{eq:consKL}, which gives the minimum sum of the KL divergences from the consensus model $\bar{\xi}$ to the committee members $\xi_c, c=1,\dots,C$. The acquisition function~\eqref{eq:ac} is also defined as the KL divergence.

In the problem setting of active learning, the number of samples given initially can be very small. Predictive models fitted using a very small number of samples, which are possibly further reduced by splitting, are likely to be very inaccurate. In the case of a linear model, the fitted coefficients $\bt_c$ can take extremely large values, so that $|\xi_c(\bx)| = |\lan \bt_c, \bx \ran|$ can be very large. This means that the parameter $\xi_c(\bx)$ behaves as an outlier in the construction of the consensus model or in the calculation of the acquisition function. Therefore, instead of the KL divergence, it is reasonable to consider a consensus model and an acquisition function using robust divergences such as the $\beta$-divergence~(Fig.~\ref{fig:consU}), which has a limited effect on the inclusion of outliers. Stable behavior can be expected even in situations where the committee members are not reliable. Unfortunately, the consensus model based on the Bregman divergence is not well-defined in general due to the impossibility of normalization. For such cases, we also consider a dual $\gamma$-power divergence, which provides an explicit consensus model that is well defined irrespective of the distributions to be mixed.

\begin{figure}[ht]
\centering
  \includegraphics[width=80mm]{./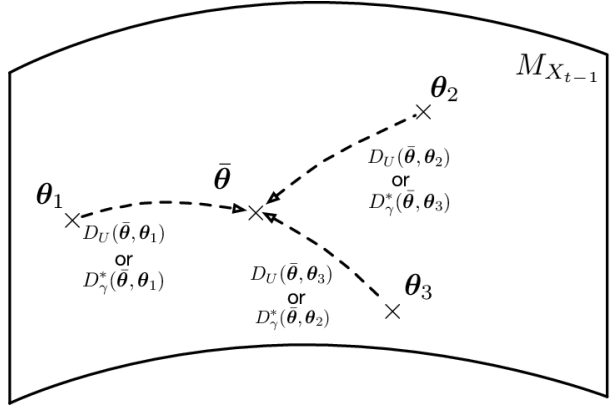}
  \caption{Consensus model based on $u$-mixture or dual $\gamma$-power mixture.}
  \label{fig:consU}
\end{figure}

\subsection{Consensus model with Bregman divergence}
It is nontrivial how to calculate the consensus model with the Bregman divergence defined by using a $\beta$-type  convex function. However, the following theorem provides a way to achieve the minimizer of the sum of Bregman divergences. 

\begin{theo}[Characterization of $u$-mixture~\cite{muratafujimoto2009}]: 
\label{theo:umix}
Let $w = (w_1,\dots,w_C) \in \Delta_C$ be an element of $C$ probability simplex. For probability density functions or probability mass functions $p_c(y), c=1,\dots,C$, consider
\begin{align}
    A_{U}(q;w) = \sum_{c=1}^{C} w_c D_{U}(q,p_c).
    \label{eq:dind}
\end{align}
Then, 
\begin{align}
    \argmin_{q} A_{U}(q;w) = p_{u}(y;w),
\end{align}
where
\begin{align}
    p_u (y;w) 
    =
    u \left( \sum_{c=1}^{C} w_c \breve{p}_{c}(y) - b\right).
    \label{eq:Ucons}
\end{align}
\end{theo}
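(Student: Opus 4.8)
The plan is to strip from $A_U(q;w)$ the part that does not depend on $q$, recognise the remainder as a convex functional, and characterise its minimiser over $\cP$ by a first-order condition with a Lagrange multiplier enforcing normalisation.

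First I would expand a single Bregman divergence directly from its definition, $D_U(q,p_c)=\int d_U(q,p_c)\dd\Lambda=\int\bigl(U^{\ast}(q)+U(\breve p_c)-q\,\breve p_c\bigr)\dd\Lambda$, in which only the term $\int U(\breve p_c)\dd\Lambda$ is independent of $q$. Taking the weighted sum and using $\sum_{c}w_c=1$ in front of the $U^{\ast}(q)$ term gives
\begin{align}
  A_U(q;w)=\int U^{\ast}(q)\,\dd\Lambda-\int q\,\Bigl(\sum_{c=1}^{C}w_c\breve p_c\Bigr)\dd\Lambda+\sum_{c=1}^{C}w_c\int U(\breve p_c)\,\dd\Lambda,
\end{align}
the last sum being a constant. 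The middle term is linear in $q$ and $q\mapsto\int U^{\ast}(q)\dd\Lambda$ is convex, since $U^{\ast}$ is the Legendre conjugate of the convex function $U$; hence $A_U(\,\cdot\,;w)$ is a convex functional on the convex set $\cP$, and its minimiser is determined by stationarity.

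To find it, I would minimise $A_U(q;w)$ subject to $\int q\dd\Lambda=1$ using a multiplier $b$: setting the variational derivative of $A_U(q;w)+b\int q\,\dd\Lambda$ to zero and using that the derivative of $U^{\ast}$ is $u^{\ast}$ with $\breve q=u^{\ast}(q)$, the Euler--Lagrange equation becomes $\breve q(y)-\sum_{c}w_c\breve p_c(y)+b=0$ for $\Lambda$-almost every $y$. Solving for $\breve q$ and applying $u$ yields $q(y)=u\bigl(\sum_{c}w_c\breve p_c(y)-b\bigr)=p_u(y;w)$, with $b$ fixed by the normalisation $\int u\bigl(\sum_c w_c\breve p_c(y)-b\bigr)\dd\Lambda(y)=1$; since $u$, the derivative of an increasing convex function, is positive, this $q$ is automatically a nonnegative density and the positivity constraint never binds.

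The delicate point is not this computation but its analytic justification: differentiation under the integral sign must be legitimate, and, more importantly, a normalising constant $b$ must exist, i.e.\ the map $b\mapsto\int u(\sum_c w_c\breve p_c-b)\dd\Lambda$ must be finite and attain the value $1$. For $U(z)=e^{z}$ this reduces to finiteness of $\int\prod_c p_c^{w_c}\dd\Lambda$ and recovers the classical $e$-mixture, whereas for the $\beta$-type $U$ it can fail; this is exactly the normalisability obstruction that motivates the dual $\gamma$-power construction. Finally, to make the argument rigorous and simultaneously obtain uniqueness once a normalised $p_u(\cdot;w)\in\cP$ exists, I would substitute $\sum_c w_c\breve p_c=\breve p_u+b$ back into the displayed expression for $A_U$ and verify directly, via the Legendre identity $U(\breve p_u)=p_u\breve p_u-U^{\ast}(p_u)$, that $A_U(q;w)-A_U(p_u(\cdot;w);w)=D_U(q,p_u(\cdot;w))\ge 0$ for every $q\in\cP$, with equality if and only if $q=p_u(\cdot;w)$.
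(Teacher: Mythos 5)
Your argument is correct, and its decisive step --- substituting $\sum_c w_c\breve p_c=\breve p_u+b$ back in and verifying via the Legendre identity that $A_U(q;w)=D_U(q,p_u(\cdot;w))+\mathrm{const}$, so that nonnegativity of the Bregman divergence forces $q=p_u(\cdot;w)$ --- is exactly the completing-the-square computation the paper uses. The preliminary Euler--Lagrange derivation of the candidate and the remarks on the existence of the normalising constant $b$ are sensible additions (the latter point is indeed acknowledged by the paper as the motivation for the dual $\gamma$-power construction), but they do not change the substance of the proof.
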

\begin{proof}
Proof of this theorem is shown in Appendix~\ref{app:A}.
\end{proof}

The model $p_u(y;w)$ is called the {\it{$u$-mixture}} of $p_c(y), c=1,\dots,C$ associated with weight $w$. 
The constant $b$ is a normalizing factor so that $p_{u}$ is a valid probability density or mass function.

\begin{ex}[KL-divergence and geometric mean]
When we consider the KL-divergence as an instance of the $u$-divergence, the geometric mean of $p_{c}(y)$ with the weights $w_{c}$ defined by
\begin{align}
    \bar{p}_G(y;w) =
    e^{-b(w)}\prod_{c=1}^{C} p_{c}(y)^{w_c},
\end{align}
where $b(w) = \log \int \prod_{c=1}^{C}p_c(y)^{w_c}dy$, minimizes the weighted average $A_0(p)=\sum_{c=1}^{C}w_c D_{0}(p,p_c)$ because $A_0(p) \geq A_0(\bar{p}_{G}) = 0.$
\end{ex}
\begin{ex}[Euclidean distance and arithmetic mean]
It is also straightforward to show that the arithmetic mean 
\begin{align}
    \bar{p}_A(y;w) = \sum_{c=1}^{C} w_c p_c(y)
\end{align}
is the minimizer of the weighted sum corresponding to the Euclidean distance. 
\end{ex}

To be concrete, for the $\beta$-divergence,
\begin{align} \notag
&    \sum_{c=1}^{C} w_c \breve{p}(y|\xi_c(\bx)) - b(\bx)\\ 
    =&
    \frac{1}{\beta} 
    \sum_{c=1}^{C} w_c \exp 
    \left\{
    \beta \frac{y \xi_c(\bx) - \psi( \xi_c(\bx))}{\phi} + \beta c(y,\phi)
    \right\} - \frac{1}{\beta} - b(\bx),
\end{align}
hence we have
\begin{align}
\notag
p_u(y|\bx;w) =&
u\left(\sum_{c=1}^{C} w_c \breve{p}(y|\xi_c(\bx)) - b(\bx)
\right)
\\
=&
\left[
\sum_{c=1}^{C} w_c \exp 
    \left\{
    \beta \frac{y \xi_c(\bx)- \psi( \xi_c(\bx) )}{\phi} + \beta c(y,\phi)
    \right\} -\beta b(\bx)
\right]^{\frac{1}{\beta}}.
\label{eq:beta_mix}
\end{align}
We note that in this expression, the consensus model does not have an explicit consensus parameter $\bar{\xi}$; therefore, we will denote the consensus model as $p_u(y|\bx;w)$ instead of $p(y|\bar{\xi}(\bx);w)$ henceforth. 
This expression contains a normalization factor $b(\bx)$, which depends on the input variable $\bx$; therefore, the analytical calculation is prohibitive.

We characterize the robustness of the $u$-mixture in terms of the influence function~\cite{rousseeuw2011robust}. 
Consider the mixture of models $p_c(y) = p(y;\xi_c)$ with respect to the $u$-divergences with a weight $w_c$.
We will omit $\bx$ for notational simplicity. 
Then, we define an $\epsilon$-contamination weight operator $W_\epsilon$
\begin{align}
    W_{\epsilon}[p] = &
    \int  \left\{ (1-\epsilon) \sum_{c=1}^{C} w_c \delta(\xi - \xi_c) + \epsilon \delta(\xi - \xi_{\out}) \right\} p(y;\xi) \dd \xi, 
\end{align}
where $\delta$ denotes the Dirac measure degenerated at zero. For notational simplicity, we write $p(y;w(\epsilon))= W_{\epsilon}[p]$. %HERE: It acts on probability density or probability function, A_U, A_gamma, etc, abuse of notation.

The outlier $\xi_{\out}$ is a parameter value far from other committee members $\xi_c$. In GLM, the parameter $\xi$ is composed of $\bt$ and $\bx$; hence there are two possible reasons for the outlying point. One is the outlying input point $\bx$, and the other is the outlying regression coefficient $\bt$. In this work, the pool $\mathcal{X}_p$ is fixed and $\|\bx\|$ is bounded, but it is meaningful to consider the situation that the parameter $\xi = \lan \bt, \bx \ran$ takes a very large value. The regression coefficient $\bt$ is, in principle, not bounded and it may make $\xi$ arbitrary value.

With this $\epsilon$-contamination weight operator, we replace the weighted sum operation such as $(1-\epsilon)\sum_{c=1}^{C} w_c p(y;\xi_c) + \epsilon p(y; \xi_{\out})$ by $W_{\epsilon}[p] = p(y;w(\epsilon))$, 
%\begin{align}
%\int \dd \xi \left\{ (1-\epsilon) \sum_{c=1}^{C} w_c \delta(\xi - \xi_c) + \epsilon \delta(\xi - \xi_{\out}) \right\} p(y;\xi),
%\end{align}
which is reduced to $\sum_{c=1}^{C} w_c p(y;\xi_c)$ when $\epsilon= 0$. 

We focus on the contamination of the outlier model $p(y;\xi_{\out})$ in the mixture and consider its influence on the minimizer $p_u(y;w)$ of $A_U(q;w)$.

\begin{defi}[Influence function of $u$-mixture]
The influence function of the minimizer $p_{u}(y;w)$ of the weighted sum of the $u$-divergences $A_U(q;w) = \sum_{c=1}^{C} w_c D_{U}(q,p_c)$ for the outlier $\xi_{\out}$ is defined as
\begin{align}
    \mathrm{IF}(p_u(y;w),\xi_{\out}) 
    =
    \lim_{\epsilon \to 0}
    \frac{ 
    p_{u}(y;w(\epsilon)) - p_{u}(y;w)
    }{
    \epsilon
    }.
\end{align}
\end{defi}

With this influence function, we can characterize the robustness of the $u$-mixture $p_u(y;w)$ of distributions in the exponential family. 
\begin{prop}
\label{prop:consIFbeta}
The influence function of the $u$-mixture of exponential family distributions is unbounded when the $u$-divergence is the Kullback--Leibler divergence. The influence function of the $u$-mixture of exponential family distributions is bounded when the $u$-divergence is the $\beta$-divergence with $\beta > 0$.
\end{prop}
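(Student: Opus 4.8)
The plan is to derive a closed form for $\mathrm{IF}(p_u(y;w),\xi_{\out})$ from the characterization~\eqref{eq:Ucons} of the $u$-mixture and then to inspect its dependence on $\xi_{\out}$ in the two cases. First I would observe that the $\epsilon$-contamination weight operator $W_\epsilon$ only enlarges the committee: it replaces $\{p_c\}_{c=1}^{C}$ with weights $w$ by $\{p_1,\dots,p_C,p_{\out}\}$ with weights $((1-\epsilon)w_1,\dots,(1-\epsilon)w_C,\epsilon)$, where $p_{\out}(y)=p(y;\xi_{\out})$, and these still sum to one. Hence Theorem~\ref{theo:umix} gives $p_u(y;w(\epsilon)) = u\big((1-\epsilon)\sum_{c=1}^{C} w_c \breve{p}_c(y) + \epsilon\,\breve{p}_{\out}(y) - b(\epsilon)\big)$. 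Differentiating at $\epsilon=0$ by the chain rule, and noting that the argument at $\epsilon=0$ equals $u^{-1}(p_u(y;w))$, I obtain
\begin{align}
 \mathrm{IF}(p_u(y;w),\xi_{\out}) = u'\big(u^{-1}(p_u(y;w))\big)\Big(\breve{p}_{\out}(y) - \sum_{c=1}^{C} w_c \breve{p}_c(y) - b'(0)\Big),
\end{align}
where $b'(0)$ is fixed by differentiating the normalization $\int p_u(y;w(\epsilon))\dd\Lambda=1$, which pins it down as $b'(0)=\big(\int u'(u^{-1}(p_u))(\breve{p}_{\out}-\sum_c w_c\breve{p}_c)\dd\Lambda\big)\big/\big(\int u'(u^{-1}(p_u))\dd\Lambda\big)$.

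For the Kullback--Leibler case I would substitute $u=\exp$, so that $u^{-1}=\log$, $u'(u^{-1}(p))=p$, the consensus $p_u(\cdot;w)$ is the geometric mean $\bar{p}_G$, and $\breve{p}_{\out}=\log p_{\out}$. Using the GLM form~\eqref{eq:GLM}, $\log p_{\out}(y)=(y\xi_{\out}-\psi(\xi_{\out}))/\phi+c(y,\phi)$, and since $\int\bar{p}_G\dd\Lambda=1$ the cumulant term $\psi(\xi_{\out})$ cancels between $\breve{p}_{\out}$ and $b'(0)$, leaving
\begin{align}
 \mathrm{IF}(p_u(y;w),\xi_{\out}) = \bar{p}_G(y)\Big(\frac{\xi_{\out}}{\phi}\big(y-\bE_{\bar{p}_G}[Y]\big)+r(y)\Big),
\end{align}
with $r(y)$ independent of $\xi_{\out}$. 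Because $\bar{p}_G(y)>0$ on the common support, for any fixed $y\neq\bE_{\bar{p}_G}[Y]$ this grows linearly in $|\xi_{\out}|$; hence the influence function is unbounded.

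For the $\beta$-divergence with $\beta>0$ I would use $u^{-1}(p)=(p^{\beta}-1)/\beta$, so $u'(u^{-1}(p))=p^{1-\beta}$ and $\breve{p}_{\out}(y)=(p_{\out}(y)^{\beta}-1)/\beta$. The crucial estimate is that $p_{\out}(y)$ is bounded uniformly in $\xi_{\out}$: the Fenchel inequality $y\xi_{\out}-\psi(\xi_{\out})\le\psi^{\ast}(y):=\sup_{\xi}\{y\xi-\psi(\xi)\}$ yields $p_{\out}(y)\le\exp(\psi^{\ast}(y)/\phi+c(y,\phi))$, finite for $y$ in the mean-value domain of the family (and $p_{\out}(y)\le1$ trivially in the discrete case). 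Since $\beta>0$, $\breve{p}_{\out}(y)$ then lies in a bounded interval that does not depend on $\xi_{\out}$, so $b'(0)$ — and therefore every factor in the formula for $\mathrm{IF}(p_u(y;w),\xi_{\out})$ — stays bounded as $\xi_{\out}\to\pm\infty$, the integrals defining $b'(0)$ converging under the standard integrability conditions attached to the $\beta$-divergence. This gives boundedness.

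The hard part will be the bookkeeping of the normalizer $b(\epsilon)$: I must verify that $b'(0)$ does not cancel the term linear in $\xi_{\out}$ in the Kullback--Leibler case — it removes only the $\psi(\xi_{\out})$ contribution — and, conversely, that it does not reintroduce unbounded $\xi_{\out}$-dependence in the $\beta$ case, which reduces exactly to the uniform boundedness of $p_{\out}$. This is where the Legendre/Fenchel structure of the exponential family, together with $\beta>0$ so that the power map $t\mapsto t^{\beta}$ preserves boundedness, does the work; at $\beta=0$ the exponent degenerates and one falls back to the unbounded Kullback--Leibler behaviour.
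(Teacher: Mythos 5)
Your proposal is correct and follows essentially the same route as the paper's proof: differentiate the contaminated $u$-mixture at $\epsilon=0$, observe that the Kullback--Leibler case produces a term linear in $\xi_{\mathrm{out}}$ (the cumulant $\psi(\xi_{\mathrm{out}})$ cancelling, so the influence function is unbounded), and that in the $\beta$ case the outlier enters only through $p(y;\xi_{\mathrm{out}})^{\beta}$, which stays bounded for $\beta>0$. Your treatment is slightly more careful than the paper's in two respects --- you track the derivative $b'(0)$ of the normalizer explicitly and you justify the uniform boundedness of $p(y;\xi_{\mathrm{out}})$ via the Fenchel--Young inequality, both of which the paper leaves implicit --- but the underlying argument is the same.
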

\begin{proof}
Proof of this proposition is shown in Appendix~\ref{app:B}.
\end{proof}

\subsection{Consensus model with dual $\gamma$-power divergence}
If the domain of $u^{\ast}$ is restricted to a subset of $\mathbb{R}$,
then there does not exist such a normalizing constant $b$ in Eq.~\eqref{eq:Ucons}.
For example, when $D_U$ is the $\beta$-power divergence, the minimizer is written as
\begin{eqnarray}
   p_u(y;w) =\Big(\sum_{c=1}^C w_c p_c(y)^\beta -b\Big)^{\frac{1}{\beta}}.
\end{eqnarray}
If $p_c(y)>0$ for all $y$ of $\mathbb R$, $\lim_{\>|y|\rightarrow\infty}p_c(y)=0$.
Hence, it must be $\beta>0$ and $b\leq0$ since $\lim_{\>|y|\rightarrow\infty}p_u(y;w)=(-b)^{\frac{1}{\beta}}$.
If $b<0$, $p_u(y;w)$ is not integrable because $p_u(y;w)\geq (-b)^{\frac{1}{\beta}}$.
Note that this problem will occur unless the support of $p_c(y)$ is finite-discrete.

To solve the problem above, we consider the dual $\gamma$-power divergence $D^{\ast}_{\gamma}(p,q)$ defined in~\eqref{gamma-star}. We introduce a simple result for the minimum dual $\gamma$-mixture.
\begin{prop}
\label{prop:gammamix}
Let $A_\gamma(q)=\sum_{c}w_c D^*_\gamma(q,p_c)$, where $D_\gamma^*(q,p_c)$ is defined in \eqref{gamma-star}.
Then the minimizer of $A_\gamma(q)$ in $q$ of $\cal P$ is given by
\begin{align}\label{gamma}
    p_{\gamma}(y;w)=\frac{1}{z(w)}\Big( \sum_{c=1}^C w_cp_c(y)^\gamma\Big)^{\frac{1}{\gamma}},
\end{align}
where
\begin{align}
{z(w)}=\int \Big( \sum_{c=1}^C w_cp_c(y)^\gamma\Big)^{\frac{1}{\gamma}}d\Lambda (y).
\label{eq:normFactorGamma}
\end{align}
\end{prop}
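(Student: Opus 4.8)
The plan is to reduce the minimization to a single application of H\"older's inequality. First I would expand $A_\gamma(q)=\sum_{c=1}^C w_c D^*_\gamma(q,p_c)$ using the definition~\eqref{gamma-star} with $q$ in the first slot and $p_c$ in the second, and pull the sum through the integrals. Writing $m(y):=\sum_{c=1}^C w_c p_c(y)^\gamma$, the common denominator $\big(\int q^{\gamma+1}\dd\Lambda\big)^{1/(\gamma+1)}$ factors out and the terms $\big(\int p_c^{\gamma+1}\dd\Lambda\big)^{\gamma/(\gamma+1)}$ assemble into a constant that does not involve $q$, giving
\[
A_\gamma(q)= -\,\frac{\int q\, m\,\dd\Lambda}{\big(\int q^{\gamma+1}\dd\Lambda\big)^{\frac{1}{\gamma+1}}}
\;+\;\sum_{c=1}^C w_c\Big(\int p_c^{\gamma+1}\dd\Lambda\Big)^{\frac{\gamma}{\gamma+1}}.
\]
Hence minimizing $A_\gamma$ over $q\in\cP$ is equivalent to maximizing the ratio $R(q):=\int q\,m\,\dd\Lambda\big/\big(\int q^{\gamma+1}\dd\Lambda\big)^{1/(\gamma+1)}$, and this ratio is invariant under $q\mapsto zq$, consistent with the first-argument scale invariance of $D^*_\gamma$ noted after~\eqref{gamma-star}.

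Next I would apply H\"older's inequality to the nonnegative functions $q$ and $m$ with the conjugate exponents $\gamma+1$ and $\tfrac{\gamma+1}{\gamma}$ (conjugate precisely because $\tfrac{1}{\gamma+1}+\tfrac{\gamma}{\gamma+1}=1$, and both exceed $1$ since $\gamma>0$):
\[
\int q\, m\,\dd\Lambda \;\le\; \Big(\int q^{\gamma+1}\dd\Lambda\Big)^{\frac{1}{\gamma+1}}\Big(\int m^{\frac{\gamma+1}{\gamma}}\dd\Lambda\Big)^{\frac{\gamma}{\gamma+1}}.
\]
This yields $R(q)\le\big(\int m^{(\gamma+1)/\gamma}\dd\Lambda\big)^{\gamma/(\gamma+1)}$, a bound independent of $q$, so $A_\gamma$ is bounded below. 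Equality in H\"older holds iff $q^{\gamma+1}$ is proportional $\Lambda$-a.e.\ to $m^{(\gamma+1)/\gamma}$, i.e.\ iff $q\propto m^{1/\gamma}=\big(\sum_{c}w_c p_c^\gamma\big)^{1/\gamma}$. Imposing $\int q\,\dd\Lambda=1$ to land in $\cP$ fixes the proportionality constant as $1/z(w)$ with $z(w)$ as in~\eqref{eq:normFactorGamma}, giving exactly $q=p_\gamma(\cdot;w)$ of~\eqref{gamma}. Since $p_\gamma(\cdot;w)$ attains the upper bound for $R$, it is the minimizer of $A_\gamma$, and it is the unique one up to the (irrelevant) scale freedom by the equality case of H\"older.

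Throughout I would tacitly assume the integrability needed for the statement to make sense, namely $m\in L^{(\gamma+1)/\gamma}(\Lambda)$ and $m^{1/\gamma}\in L^1(\Lambda)$, so that $z(w)<\infty$ and $p_\gamma(\cdot;w)\in\cP$; a short remark to that effect should be inserted (for $\gamma\ge 1$ these follow from concavity of $t\mapsto t^{1/\gamma}$, which also gives $z(w)\le 1$). The only genuinely delicate point—the ``hard part''—is the bookkeeping around the sign and the scale invariance: because we are minimizing $-R(q)+\mathrm{const}$ rather than maximizing $R(q)$ directly, one must verify that H\"older produces the correct \emph{lower} bound on $A_\gamma$ and that the H\"older equality case is a bona fide global minimizer rather than a stationary point; the explicit value $A_\gamma(p_\gamma(\cdot;w))=-\big(\int m^{(\gamma+1)/\gamma}\dd\Lambda\big)^{\gamma/(\gamma+1)}+\sum_c w_c\big(\int p_c^{\gamma+1}\dd\Lambda\big)^{\gamma/(\gamma+1)}$ can be recorded as a byproduct.
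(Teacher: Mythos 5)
Your argument is correct, but it reaches the conclusion by a different route than the paper. The paper's proof computes the difference $A_\gamma(q)-A_\gamma(p_\gamma(\cdot;w))$ directly and recognizes it as $z(w)^\gamma\,D^*_\gamma(q,p_\gamma(\cdot;w))$, so nonnegativity of the excess is inherited from nonnegativity of the dual $\gamma$-power divergence itself (which the paper established earlier via the scale-adjusted $\beta$-divergence computation leading to \eqref{dif}); the equality case $q=p_\gamma(\cdot;w)$ then follows from the divergence property. You instead expose $A_\gamma(q)=-R(q)+\mathrm{const}$ and apply H\"older with exponents $\gamma+1$ and $\tfrac{\gamma+1}{\gamma}$ directly to $\int q\,m\,\dd\Lambda$ with $m=\sum_c w_c p_c^\gamma$. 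The two arguments are substantively equivalent (the paper's nonnegativity of $D^*_\gamma$ is itself H\"older in disguise), but yours is more self-contained: it does not presuppose $D^*_\gamma\geq 0$, it identifies the minimizer constructively through the H\"older equality case rather than verifying a candidate, it records the explicit minimum value, and it makes the integrability hypotheses ($m\in L^{(\gamma+1)/\gamma}$, $z(w)<\infty$) visible --- conditions which in fact follow from $\int p_c^{\gamma+1}\dd\Lambda<\infty$ via Minkowski's inequality. What the paper's route buys is the information-geometric identity $A_\gamma(q)-A_\gamma(p_\gamma(\cdot;w))=z(w)^\gamma D^*_\gamma(q,p_\gamma(\cdot;w))$, a Pythagorean-type decomposition that is of independent interest.

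One small correction to your closing parenthetical: for $\gamma\geq 1$ concavity of $t\mapsto t^{1/\gamma}$ (with value $0$ at $0$, hence subadditivity) does give $z(w)\leq \sum_c w_c^{1/\gamma}<\infty$, but the power-mean inequality forces $z(w)\geq 1$ in that regime, not $z(w)\leq 1$; the bound $z(w)\leq 1$ holds for $0<\gamma\leq 1$. This does not affect the main argument.
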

\begin{proof}
Proof of this proposition is shown in Appendix~\ref{app:A3}.
\end{proof}
The model $p_{\gamma}(y;w)$ is called the {\it{dual $\gamma$-mixture}} of $p_c(y), c=1,\dots,C$ associated with weight $w$. 

We now focus on the behaviors of the minimizers discussed above.
For this, it is assumed that $p_c(y)=p(y;\xi_c)$, where $p(y;\xi)=\exp\Big\{\frac{y\xi-\psi(\xi)}{\phi}+c(y,\phi)\Big\}.$

\begin{defi}[Influence function of dual $\gamma$-mixture]
The influence function of the minimizer $p_{\gamma}(y;w)$ of the weighted sum of the dual $\gamma$-divergences $A_{\gamma}(q;w) = \sum_{c=1}^{C} w_c D_{\gamma}^{\ast}(q,p_c)$ for the outlier $\xi_{\out}$ is defined as
\begin{align}
    \mathrm{IF}(p_{\gamma}(y;w),\xi_{\out}) 
    =
    \lim_{\epsilon \to 0}
    \frac{ 
    p_{\gamma}(y;w(\epsilon)) - p_{\gamma}(y;w)
    }{
    \epsilon
    }.
\end{align}
\end{defi}

\begin{prop}
\label{prop:consIFgamma}
The influence function of the dual $\gamma$-mixture of exponential family distributions is bounded when $\gamma > 0$.
\end{prop}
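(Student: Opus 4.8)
The plan is to follow the template of the $u$-mixture analysis behind Proposition~\ref{prop:consIFbeta}: write the contaminated dual $\gamma$-mixture explicitly by Proposition~\ref{prop:gammamix}, differentiate in the contamination level $\epsilon$, and read off the influence function in closed form. Under the $\epsilon$-contamination weight operator, the unnormalized profile $\sum_{c}w_{c}p_{c}(y)^{\gamma}$ becomes $S_{\epsilon}(y):=(1-\epsilon)S(y)+\epsilon\,p(y;\xi_{\out})^{\gamma}$ with $S(y):=\sum_{c=1}^{C}w_{c}p(y;\xi_{c})^{\gamma}$, so that $p_{\gamma}(y;w(\epsilon))=S_{\epsilon}(y)^{1/\gamma}/z_{\epsilon}$ with $z_{\epsilon}=\int S_{\epsilon}^{1/\gamma}\dd\Lambda$. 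Differentiating the quotient at $\epsilon=0$, where $S_{0}=S$, $z_{0}=z(w)$, and $\partial_{\epsilon}S_{\epsilon}|_{0}=p(y;\xi_{\out})^{\gamma}-S(y)$, the normalizer-dependent pieces partly cancel and I expect to arrive at
\[
\mathrm{IF}(p_{\gamma}(y;w),\xi_{\out})=\frac{1}{\gamma\,z(w)}\Bigl[S(y)^{\frac1\gamma-1}p(y;\xi_{\out})^{\gamma}-p_{\gamma}(y;w)\int S(y')^{\frac1\gamma-1}p(y';\xi_{\out})^{\gamma}\dd\Lambda(y')\Bigr],
\]
which integrates to zero in $y$, as the influence function of any density-valued functional must.

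The robustness claim then reduces to bounding the two bracketed terms uniformly in the outlier parameter $\xi_{\out}$. The decisive fact --- the one that fails for Kullback--Leibler, where it is $\log p(y;\xi_{\out})$, linear in $\xi_{\out}$, that enters the influence function --- is that for a regular exponential family the map $\xi\mapsto p(y;\xi)=\exp\{(y\xi-\psi(\xi))/\phi+c(y,\phi)\}$ is uniformly bounded: by strict convexity and steepness of the cumulant function $\psi$, the exponent $y\xi-\psi(\xi)$ is bounded above in $\xi$ (essentially maximized at the unique $\xi^{\ast}(y)$ with $\psi'(\xi^{\ast}(y))=y$) and tends to $-\infty$ as $|\xi|\to\infty$, whence $M(y):=\sup_{\xi}p(y;\xi)<\infty$ and moreover $p(y;\xi_{\out})^{\gamma}\to0$ as $|\xi_{\out}|\to\infty$. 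Consequently the first bracketed term is dominated by $S(y)^{1/\gamma-1}M(y)^{\gamma}$, a quantity free of $\xi_{\out}$, and in fact vanishes in the large-outlier limit.

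The step I expect to be the main obstacle is the second bracketed term, i.e.\ controlling the derivative of the normalizer through $I(\xi_{\out}):=\int S(y')^{1/\gamma-1}p(y';\xi_{\out})^{\gamma}\dd\Lambda(y')$; this also forces me to justify differentiation of $z_{\epsilon}$ under the integral sign. I would do this with the pointwise envelope $S(y')^{1/\gamma-1}p(y';\xi_{\out})^{\gamma}\le S(y')^{1/\gamma-1}M(y')^{\gamma}$ together with dominated convergence whenever this envelope is $\Lambda$-integrable --- which is immediate for finite-support models such as logistic regression --- and, for $0<\gamma\le1$, with the sharper bound $S(y')^{1/\gamma-1}\le\bigl(\sum_{c}w_{c}M(y')^{\gamma}\bigr)^{1/\gamma-1}=M(y')^{1-\gamma}$, which reduces the claim to uniform boundedness in $\xi_{\out}$ of the self-normalizing integral $\int M(y')^{1-\gamma}p(y';\xi_{\out})^{\gamma}\dd\Lambda(y')$, handled by the same estimate used for the $\beta$-divergence in Appendix~\ref{app:B}. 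Combining the two bounds with $p_{\gamma}(\cdot;w)$ being a fixed density and $z(w)>0$ a fixed constant yields $\sup_{\xi_{\out}}|\mathrm{IF}(p_{\gamma}(y;w),\xi_{\out})|<\infty$ for each $y$, which is the bounded-influence property asserted for $\gamma>0$.
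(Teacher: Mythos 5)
Your proposal is correct and takes essentially the same route as the paper's Appendix~\ref{app:A4}: both write the contaminated dual $\gamma$-mixture as $S_{\epsilon}(y)^{1/\gamma}/z_{\epsilon}$, differentiate at $\epsilon=0$, and conclude boundedness from the fact that $\xi\mapsto p(y;\xi)$ stays bounded for a regular exponential family; your closed form for the influence function is the correctly cancelled version of the paper's display (the paper's derivative of $z(w(\epsilon))$ drops the chain-rule factor $\tfrac{1}{\gamma}\overline{p^{\gamma}}^{1/\gamma-1}$, which your computation restores), and your dominated-convergence justification of differentiating the normalizer is extra care the paper omits. The only quibble is the side remark that the exponent $y\xi-\psi(\xi)$ tends to $-\infty$ so that $p(y;\xi_{\out})^{\gamma}\to 0$: for discrete families such as the logistic case the limit can be nonzero for some $y$ (the paper's ``singular density'' $p_{\pm\infty}$), but since your argument only needs $M(y)=\sup_{\xi}p(y;\xi)<\infty$, i.e.\ finiteness of the conjugate $\psi^{\ast}(y)$, this does not affect the conclusion.
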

\begin{proof}
Proof of this proposition is shown in Appendix~\ref{app:A4}.
\end{proof}

\section{Acquisition function with robust divergences}
In~\cite{10.5555/645527.757765}, the acquisition function is defined as 
\begin{align}
    a_0(\bx;w) =
    \sum_{c=1}^{C} w_c D_0(p(Y;\xi_c(\bx)),p(Y;\bar{\xi}(\bx))),
\end{align}
where the weight $w$ is explicitly denoted to consider the effect of an outlying committee member. 
Here, we also consider acquisition functions based on the $\beta$-divergence and the dual $\gamma$-power divergence given by
\begin{align}
    a_{\beta}(\bx;w) =
    \sum_{c=1}^{C} w_c D_{\beta}(p(Y;\xi_c(\bx)),p_{\beta}(Y|\bx))
\end{align}
and
\begin{align}
    a_{\gamma}(\bx;w) =
    \sum_{c=1}^{C} w_c D_{\gamma}^{\ast}(p(Y;\xi_c(\bx)),p_{\gamma}(Y|\bx)),
\end{align}
where $p_{\beta}$ and $p_{\gamma}$ are the consensus models obtained with respect to the $\beta$-divergence and the dual $\gamma$-power divergence, respectively.

Denoting the $\epsilon$-contaminated activation function as $a(\bm{x};w(\epsilon)) = $We define the influence function for the acquisition function $a(\bx;w)$ as follows.
\begin{defi}[Influence function of acquisition function]
The influence function of the acquisition function for the outlier $\xi_{\out}$ is defined as
\begin{align}
    \mathrm{IF}(a(\bx;w),\xi_{\out}) 
    =
    \lim_{\epsilon \to 0}
    \frac{ 
    a(\bx;w(\epsilon)) - a(\bx;w)
    }{
    \epsilon
    },
\end{align}
where $ a(\bx,w(\epsilon)) = W_{\epsilon}[D(p(y;\xi), p(y; W_{\epsilon}[\xi]))].$
\end{defi}

Then we have the following proposition.
\begin{prop}
\label{prop:IFac}
The influence function for $a_0$ is not bounded, whereas those for $a_{\beta}$ and $a_{\gamma}$ are bounded with respect to an outlier $\xi_{\out}$.
\end{prop}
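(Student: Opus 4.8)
The plan is to split the $\epsilon$-contaminated acquisition function, for each divergence $D\in\{D_0,D_\beta,D^*_\gamma\}$, into an ``outer'' contamination of the averaging weights and an ``inner'' contamination of the consensus model. Write $p_c=p(Y;\xi_c(\bx))$, $p_{\out}=p(Y;\xi_{\out}(\bx))$, and let $\bar p_\epsilon$ be the consensus model built from the $\epsilon$-contaminated weight $w(\epsilon)$ — the $e$-mixture $p(\cdot;\bar\xi_\epsilon)$ for $D_0$, the $u$-mixture $p_\beta$ of \eqref{eq:beta_mix} for $D_\beta$, the dual $\gamma$-mixture $p_\gamma$ of \eqref{gamma} for $D^*_\gamma$, so that $\bar p_0$ is the uncontaminated consensus. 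Then
\begin{align}
a(\bx;w(\epsilon)) = (1-\epsilon)\sum_{c=1}^{C} w_c\, D(p_c,\bar p_\epsilon) + \epsilon\, D(p_{\out},\bar p_\epsilon),
\end{align}
and differentiating at $\epsilon=0$ by the product and chain rules gives
\begin{align}
\mathrm{IF}(a(\bx;w),\xi_{\out}) &= \underbrace{D(p_{\out},\bar p_0) - \sum_{c=1}^{C} w_c\, D(p_c,\bar p_0)}_{(\mathrm{I})}\\
&\quad+ \underbrace{\sum_{c=1}^{C} w_c\, \mathrm{d}_2 D(p_c,\bar p_0)\big[\mathrm{IF}(\bar p;\xi_{\out})\big]}_{(\mathrm{II})},
\end{align}
where $\mathrm{d}_2 D$ is the derivative of $D$ in its second argument and $\mathrm{IF}(\bar p;\xi_{\out})$ is exactly the consensus-model influence function controlled by Propositions~\ref{prop:consIFbeta} and~\ref{prop:consIFgamma}. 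Since $p_c$ and $\bar p_0$ do not depend on $\xi_{\out}$, the linear functional $\mathrm{d}_2 D(p_c,\bar p_0)[\,\cdot\,]$ in $(\mathrm{II})$ is fixed, so the whole claim reduces to bounding the $\xi_{\out}$-dependence of $(\mathrm{I})$ and $(\mathrm{II})$.

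For $D_\beta$ and $D^*_\gamma$ both terms stay bounded. For $(\mathrm{I})$ I would use the closed forms \eqref{eq:beta-div} and \eqref{gamma-star}: the self-terms $\int p_{\out}^{\beta+1}\dd\Lambda$ and $\int p_{\out}^{\gamma+1}\dd\Lambda$ are bounded uniformly in $\xi_{\out}$ and bounded away from $0$ for exponential families (independent of the mean for a Gaussian; tending to $1$ as $\xi_{\out}\to\pm\infty$ for a binomial); the cross-terms equal $\mathbb{E}_{\xi_{\out}}[\bar p_0(Y)^\beta]$ and its $\gamma$-analogue, hence are at most $\sup_y\bar p_0(y)^\beta$ and in fact vanish as $\|\xi_{\out}\|\to\infty$ for Gaussian-type tails; and the remaining terms involve only the fixed $\bar p_0$. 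For $(\mathrm{II})$ the second-argument derivative is $\mathrm{d}_2 D_\beta(p,q)[h]=\int q^{\beta-1}(q-p)\,h\,\dd\Lambda$ (and similarly for $D^*_\gamma$), a fixed bounded linear functional evaluated at the bounded $\mathrm{IF}(\bar p;\xi_{\out})$ from Propositions~\ref{prop:consIFbeta}--\ref{prop:consIFgamma}; a routine integrability check (using $\beta,\gamma>0$ and the exponential-family tails) makes $(\mathrm{II})$ bounded. Hence $\mathrm{IF}(a_\beta,\xi_{\out})$ and $\mathrm{IF}(a_\gamma,\xi_{\out})$ are bounded.

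For $a_0$ it is enough to exhibit one model where $(\mathrm{I})+(\mathrm{II})$ is unbounded. Take a Gaussian (multiple-regression) committee, so $\psi(\xi)=\tfrac12\xi^2$, the consensus is $p(\cdot;\bar\xi)$ with $\bar\xi_\epsilon=(1-\epsilon)\bar\xi+\epsilon\xi_{\out}$, hence $\mathrm{IF}(\bar p;\xi_{\out})$ corresponds to $\dot{\bar\xi}=\xi_{\out}-\bar\xi$. Then $\mathrm{d}_2 D_0(p_c,p(\cdot;\xi))\big|_{\bar\xi}$ is proportional to $\psi'(\bar\xi)-\psi'(\xi_c)=\bar\xi-\xi_c$, so $(\mathrm{II})=\tfrac{1}{\phi}(\xi_{\out}-\bar\xi)\sum_c w_c(\bar\xi-\xi_c)=0$ because $\sum_c w_c(\xi_c-\bar\xi)=0$, while $(\mathrm{I})=\tfrac{1}{2\phi}(\xi_{\out}-\bar\xi)^2 - \text{const}\to\infty$ as $\|\xi_{\out}\|\to\infty$; thus $\mathrm{IF}(a_0,\xi_{\out})$ is unbounded. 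More generally the exponential-family computation gives $(\mathrm{I})=\tfrac1\phi\{\psi'(\xi_{\out})(\xi_{\out}-\bar\xi)-\psi(\xi_{\out})\}+\text{const}$ and $(\mathrm{II})=\tfrac1\phi\{\psi'(\bar\xi)-\sum_c w_c\psi'(\xi_c)\}(\xi_{\out}-\bar\xi)$; for non-quadratic $\psi$ the bracket in $(\mathrm{II})$ is generically nonzero, so $(\mathrm{II})$ grows linearly and forces unboundedness even when $(\mathrm{I})$ saturates, as in logistic regression.

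The main obstacle is precisely the $a_0$ case — ruling out a cancellation between the two divergent pieces $(\mathrm{I})$ and $(\mathrm{II})$. The resolution is the growth-order comparison above: $(\mathrm{I})$ is a Bregman divergence of the strictly convex cumulant $\psi$, whereas $(\mathrm{II})$ is a fixed linear functional of the at-most-linear consensus influence $\xi_{\out}-\bar\xi$, so the two have different orders and cannot cancel for large $\|\xi_{\out}\|$, and the Gaussian instance already supplies a self-contained witness. The remaining ingredients — exchanging $\partial_\epsilon$ with the $\epsilon\to0$ limit (smoothness of $\epsilon\mapsto\bar p_\epsilon$) and the integrability bookkeeping in $(\mathrm{II})$ for general GLMs — are routine given \eqref{eq:GLM} and Propositions~\ref{prop:consIFbeta}--\ref{prop:consIFgamma}.
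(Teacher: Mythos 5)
Your proposal is correct and follows essentially the same route as the paper's proof: differentiate the $\epsilon$-contaminated acquisition function $W_{\epsilon}[D(p(\cdot;\xi),p(\cdot;W_{\epsilon}[\xi]))]$ at $\epsilon=0$, which splits into your replacement term $(\mathrm{I})$ and the chain-rule term $(\mathrm{II})$ propagated through the consensus-model influence functions of Propositions~\ref{prop:consIFbeta} and~\ref{prop:consIFgamma}, and then check boundedness of each piece via the tail behavior of $p_{\out}^{\beta}$, $p_{\out}^{\gamma}$ and the growth of $\psi$, $\psi'$. The paper carries out the identical computation with every term written out explicitly rather than packaged as $(\mathrm{I})+(\mathrm{II})$; your Gaussian witness and growth-order comparison for the $a_0$ case make the unboundedness claim somewhat more careful than the paper's direct reading of its general formula, but the underlying argument is the same.
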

\begin{proof}
Proof of this proposition is shown in Appendix~\ref{app:A5}.
\end{proof}
This proposition claims that, as is the case for the consensus model, the mixtures based on the KL divergence is vulnerable to outlier in the committee while those based on the $\beta$ and the dual $\gamma$-power divergence are robust.

\section{Experiments}
Logistic regression is used as the predictive model. The model is fitted using the initial training dataset $S_0$. Then, 10 logistic regression models are trained on 10 partitions of the labeled data at hand and used as committee members, where $w_c = 1/10, \; c=1,\dots, 10$. The consensus models are constructed on the basis of the KL divergence and the $\beta$ divergence with $\beta=1.0$ and the dual $\gamma$-power divergence with $\gamma = 1.0$, and by using the acquisition function using them, we select one datum from the pool dataset $\cX_p$. The correct label is assigned to the selected sample and added to the training data, and the predictive model is retrained using the extended training dataset. As a baseline, we also compare the results with those obtained by replacing the selection by acquisition function with random sampling. 

The following one artificial dataset and six real-world datasets from the LIBSVM datasets\footnote{\url{https://www.csie.ntu.edu.tw/~cjlin/libsvmtools/datasets/}} are considered.
\begin{enumerate}
    \item {\tt{artificial}}: It is an artificially generated three-dimensional dataset for two-class classification. The samples are drawn from $\mathcal{N}(\bm{\mu}_i,\Sigma), i=0,1$, where $\bm{\mu}_0 = (1,1,1)$ and $\bm{\mu}_1 = - \bm{\mu}_0$, and $\Sigma = I_3$ (unit matrix). 
    The pooled dataset is composed of $1,000$ data points for each class. 
    The prediction error is evaluated by using $50,000$ data points for each class generated in the same manner as the training dataset.
    By three sampling methods, we sequentially selected $100$ samples.
    \item {\tt{adult}}: The original number of attributes is 123 and is reduced to three by PCA. 100 samples are sequentially selected by three sampling methods.
    The original sample has 48,842 data points with 37,155 positives and 11,687 negatives. 
    \item {\tt{breast-cancer}}: The original number of attributes is 9 and is reduced to three by PCA. 100 samples are sequentially selected by three sampling methods.
    The original sample has 4,000 data points with 2,839 positives and 1,161 negatives. 
    \item {\tt{diabetis}}: The original number of attributes is 8 and is reduced to three by PCA. 100 samples are sequentially selected by three sampling methods.
    The original sample has 9,360 data points with 6,078 positives and 3,282 negatives. 
    \item {\tt{mushrooms}}: The original number of attributes is 112, and is reduced to three by PCA. 100 samples are sequentially selected by three sampling methods. 
    The original sample has 8,124 data points with 4,208 positives and 3,916 negatives. 
    \item {\tt{ijcnn}}: The original number of attributes is 22, and is reduced to three by PCA. 100 samples are sequentially selected by three sampling methods. The original sample has 35,000 data points with 31,585 positives and 3,415 negatives. 
    \item {\tt{titanic}}: The original number of attributes is three. 100 samples are sequentially selected by three sampling methods.
    The original sample has 3,000 data points with 2,009 positives and 991 negatives. 
\end{enumerate}

The initial number of data points is 50 for each class. For the three real-world datasets, the pool and test datasets are generated as follows. 
Keeping the initial training dataset with 100 data points, let the number of data points in the minority class be $m$. As the pool dataset, $m$ data points are randomly drawn from each of the positive and negative datasets, and the remaining data points are used for evaluating the prediction error. 
Averages and standard deviations of 10 random samplings of initial data, random sampling, and random splitting for learning committee members are shown in Figure~\ref{fig:results}. 
\begin{figure*}[t!]
\centering
\includegraphics[scale=0.26]{./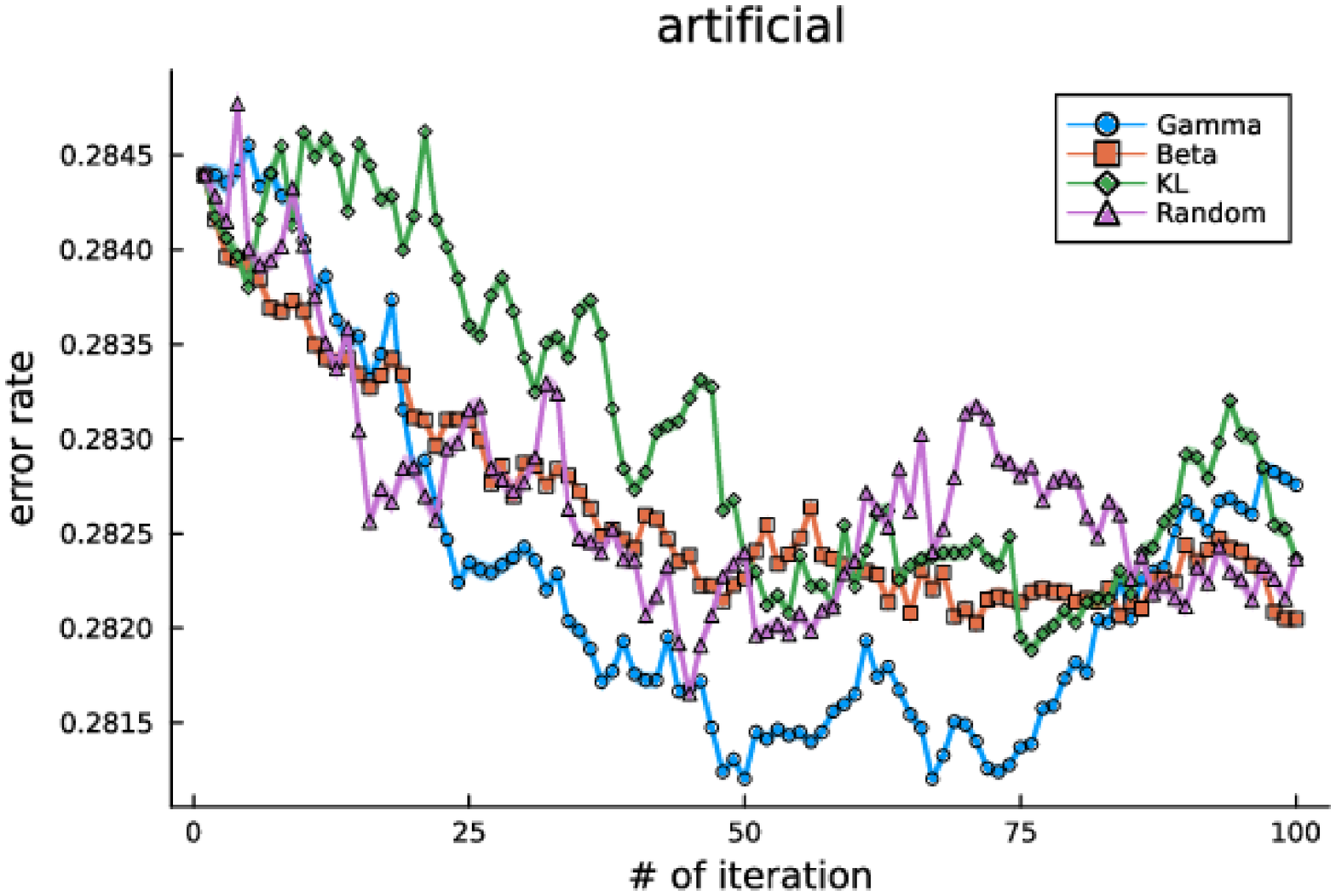}
\includegraphics[scale=0.26]{./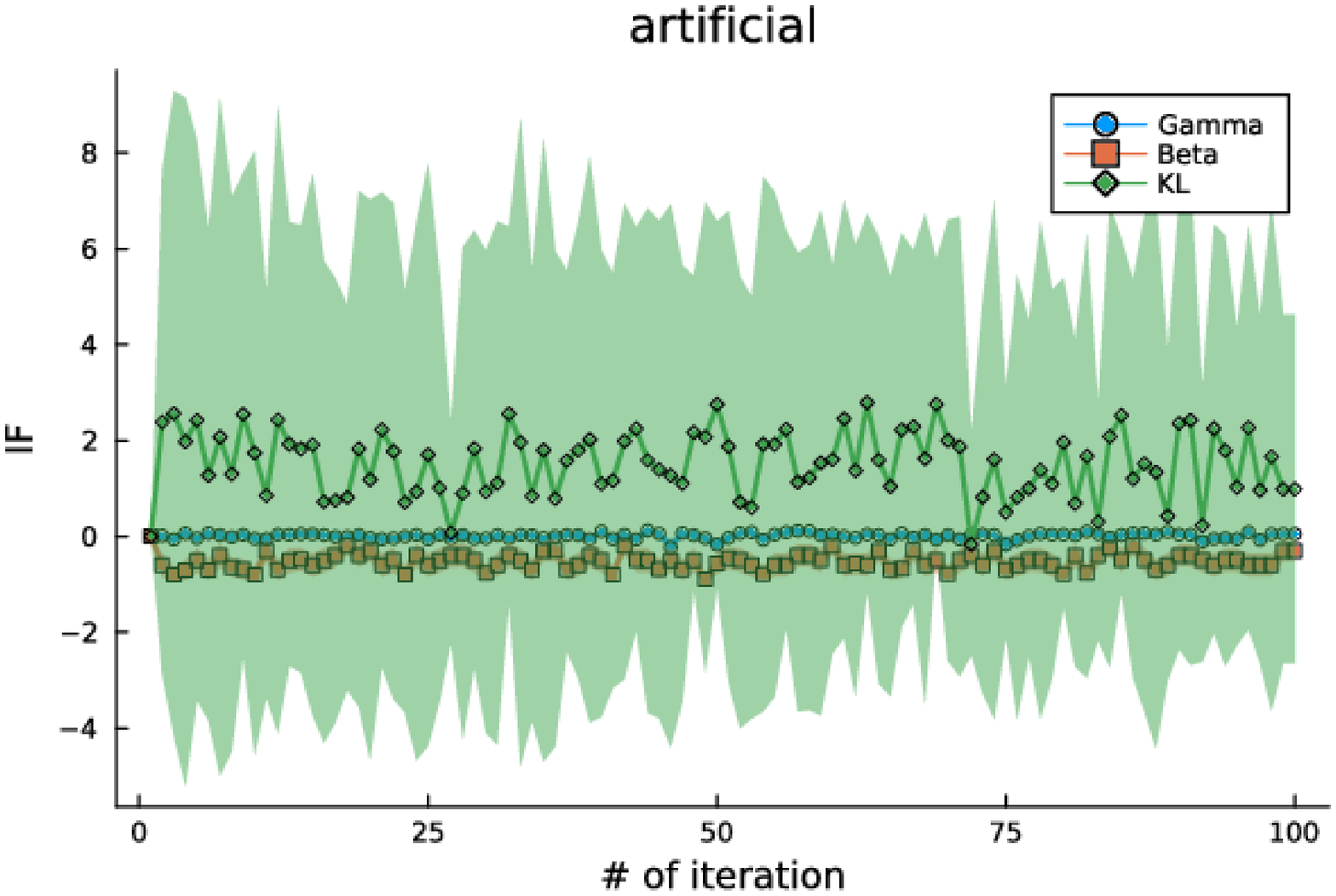}\\
\includegraphics[scale=0.26]{./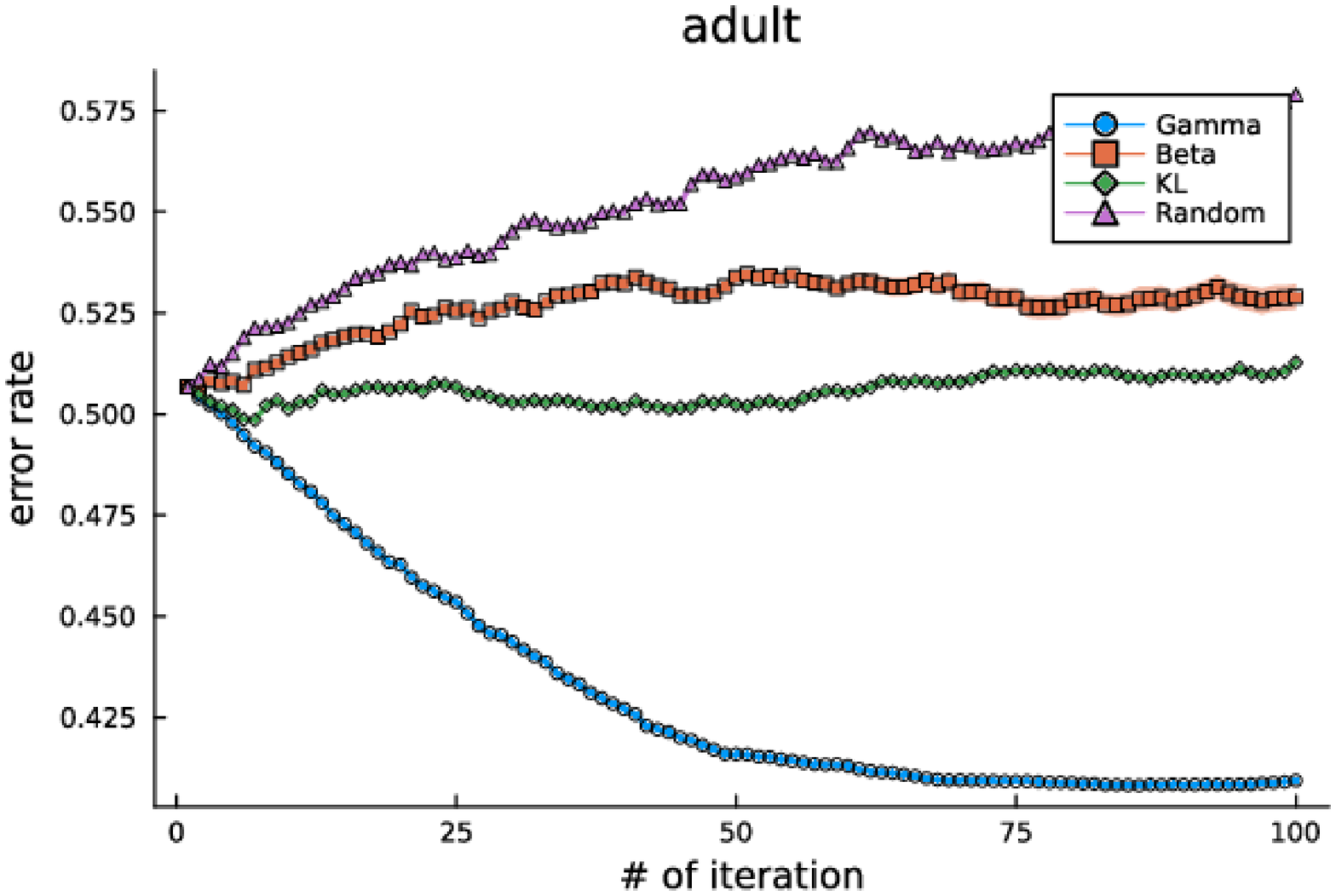}
\includegraphics[scale=0.26]{./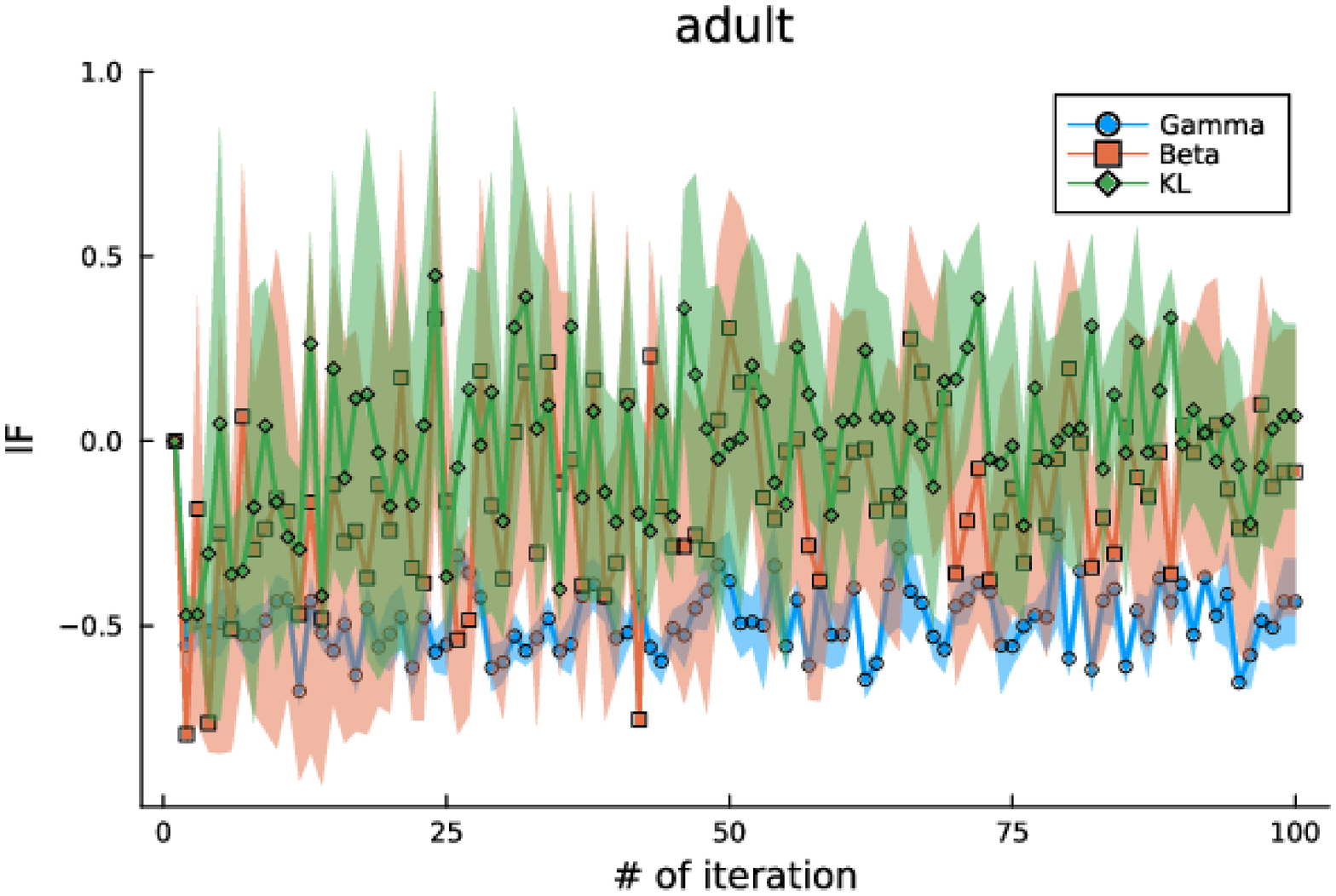}\\
\includegraphics[scale=0.26]{./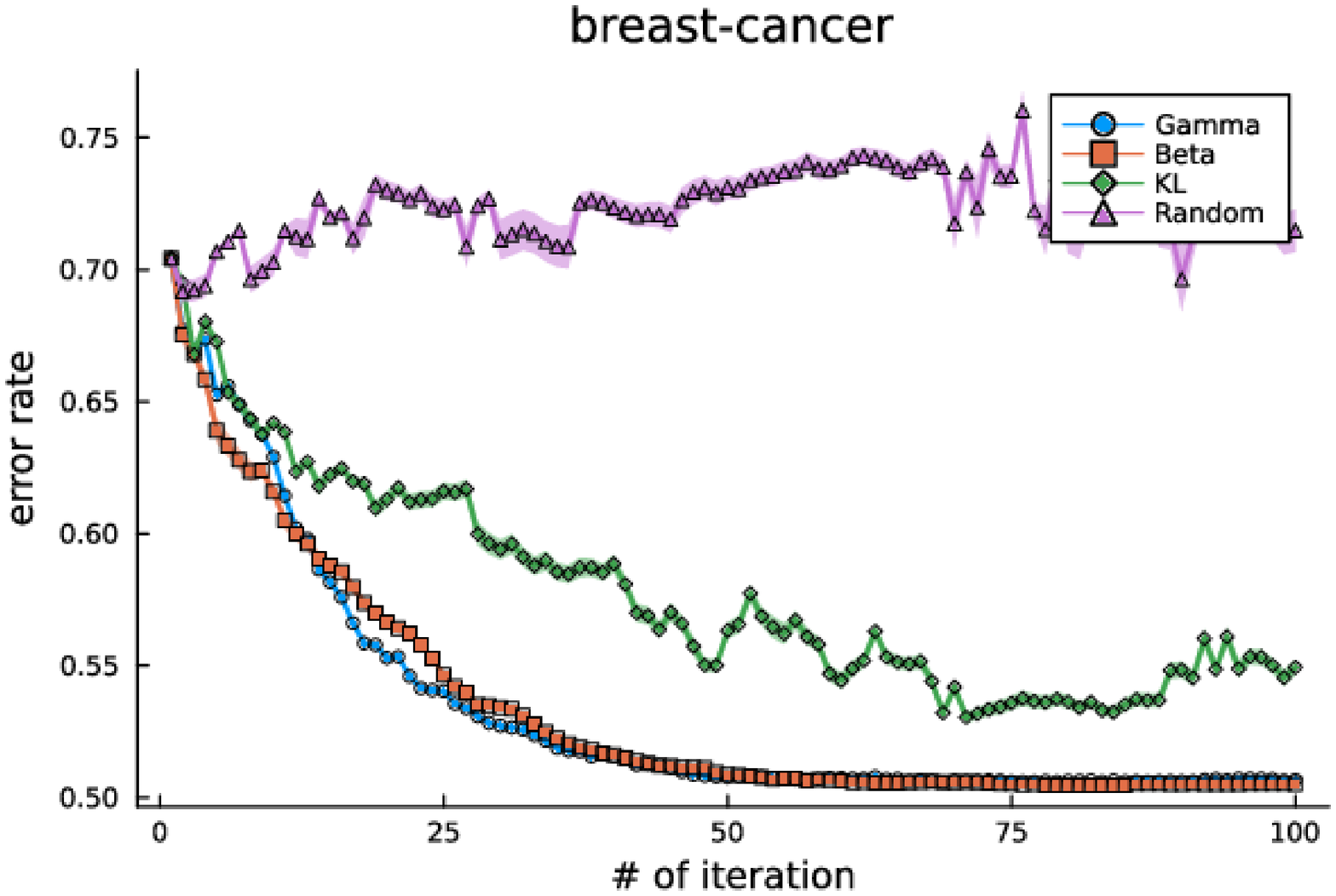}
\includegraphics[scale=0.26]{./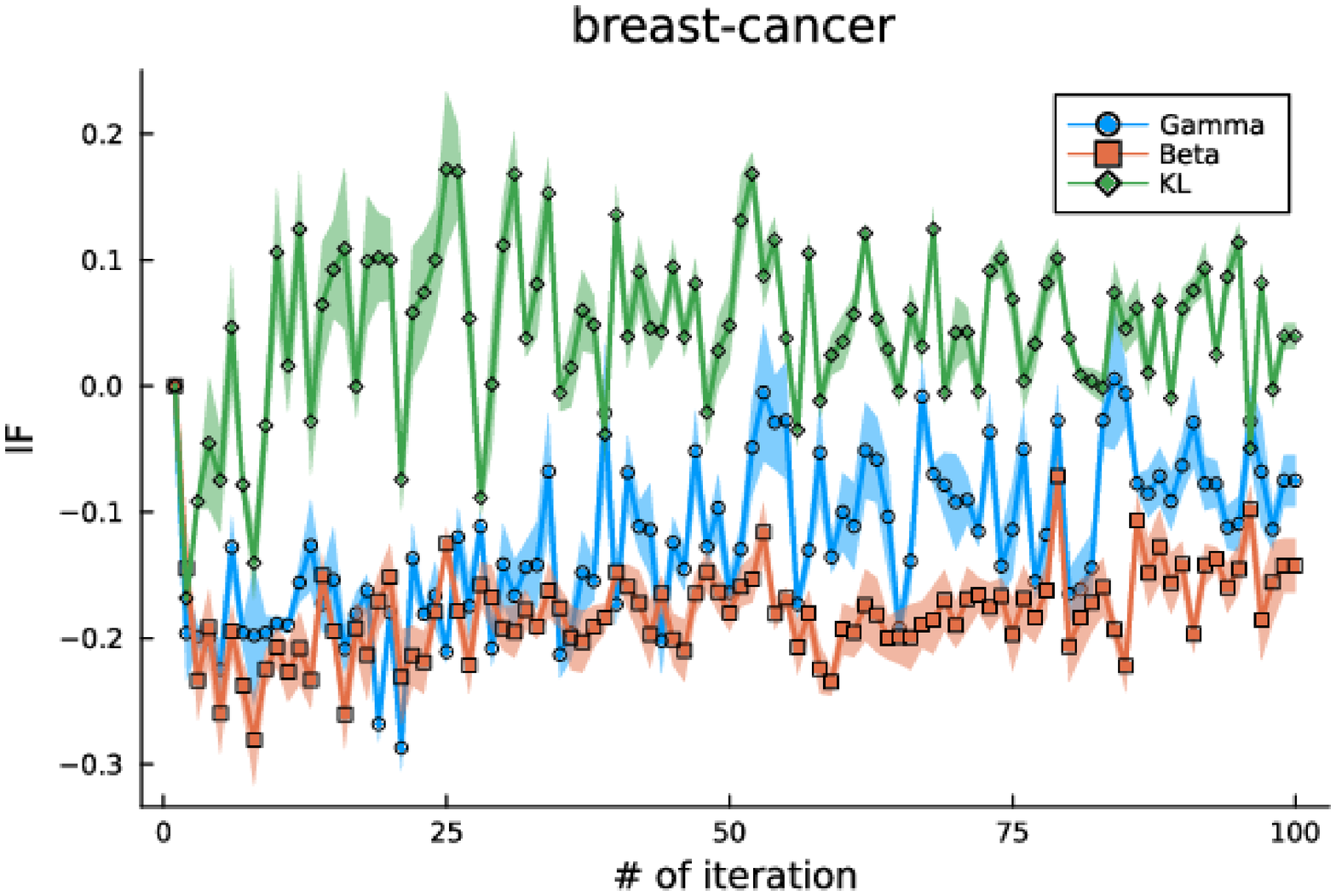}\\
\includegraphics[scale=0.26]{./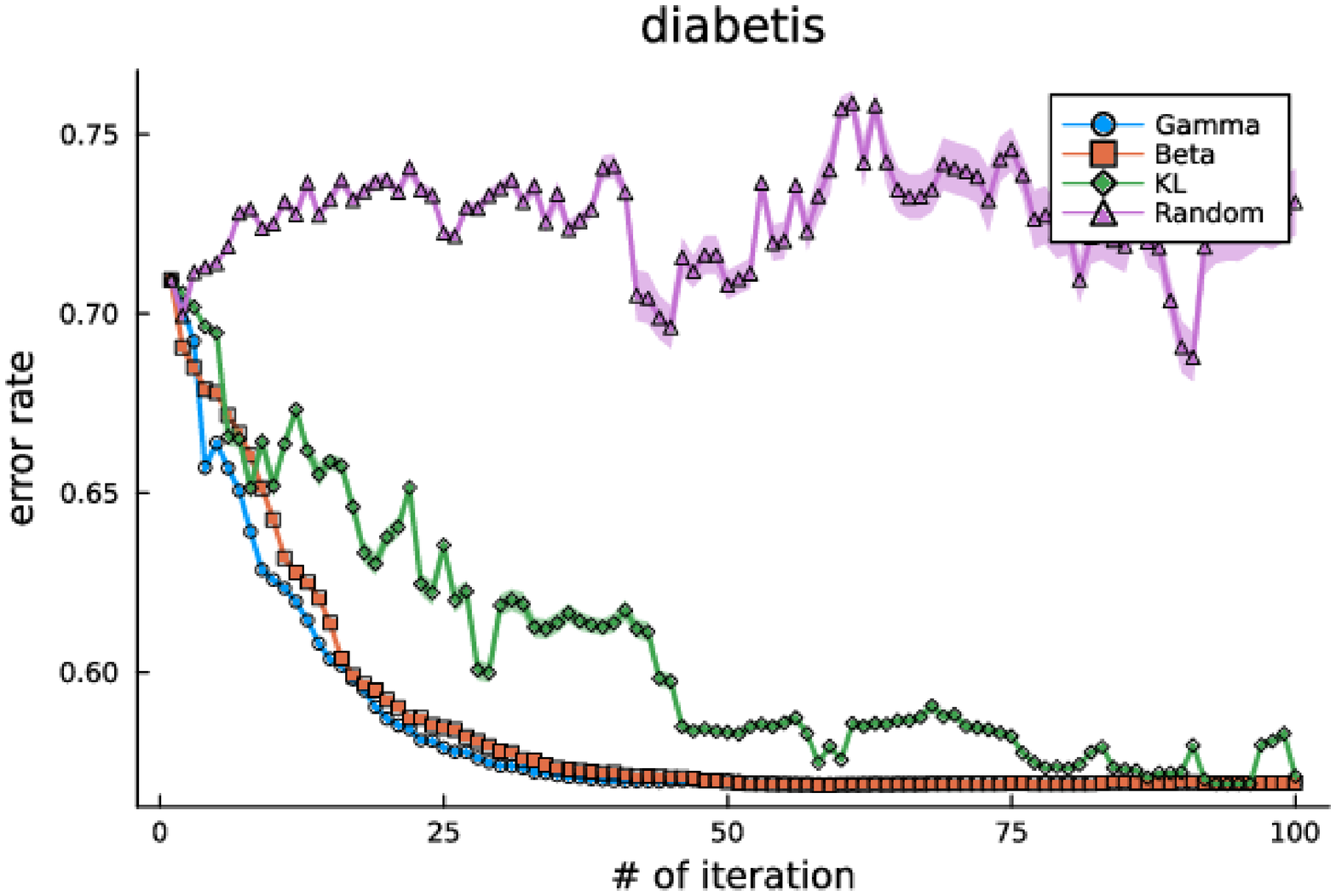}
\includegraphics[scale=0.26]{./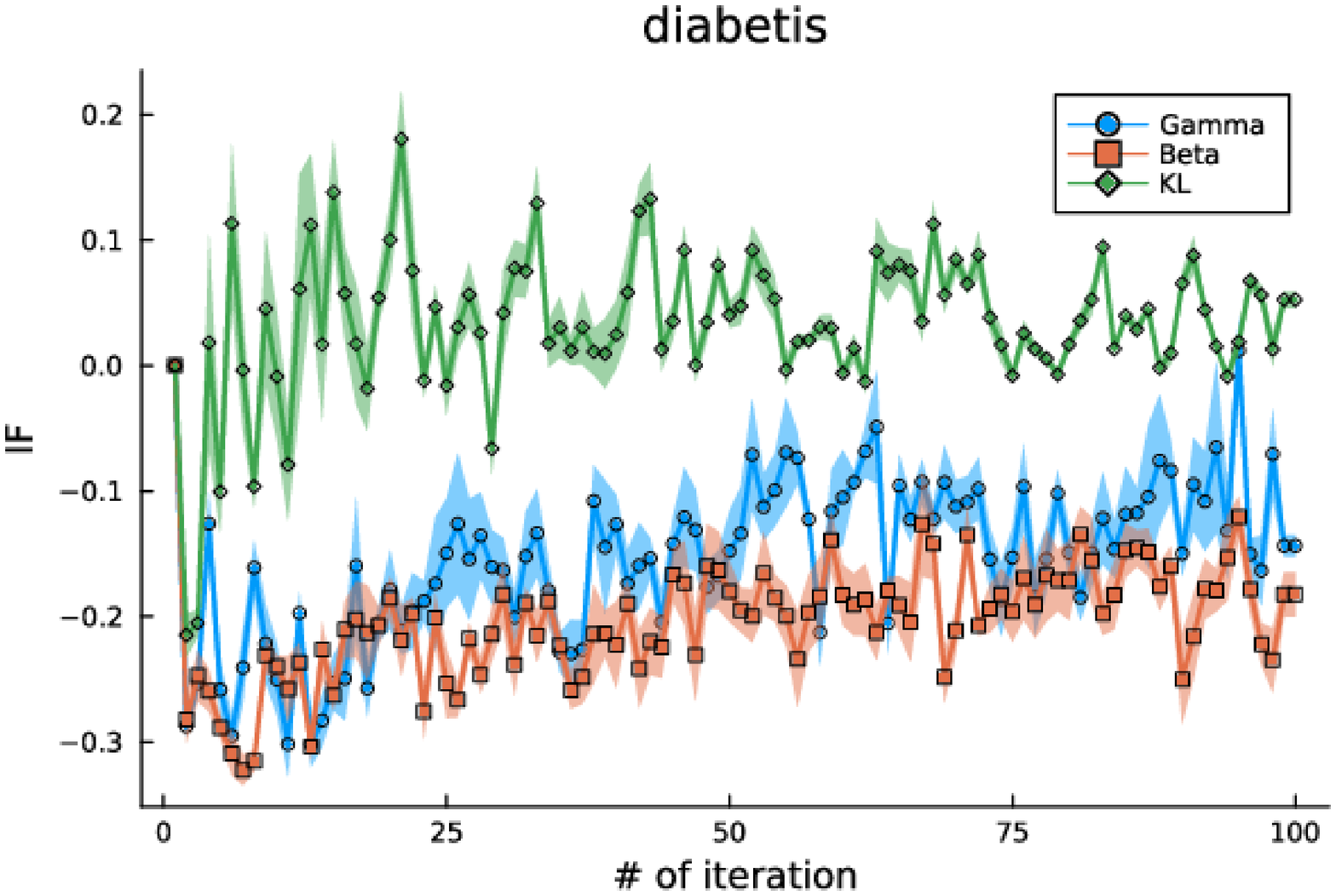}
\caption{Left: Prediction error and the number of acquired data points in four sampling methods for dataset {\tt{artificial}}, {\tt{adult}}, {\tt{breast-cancer}}, {\tt{diabetis}}. Right: Influence function values at queried points with $\xi_{\out}$ being the absolute maximum in the pool.}
\label{fig:results}
\end{figure*}
\begin{figure*}[t!]
\centering
\includegraphics[scale=0.26]{./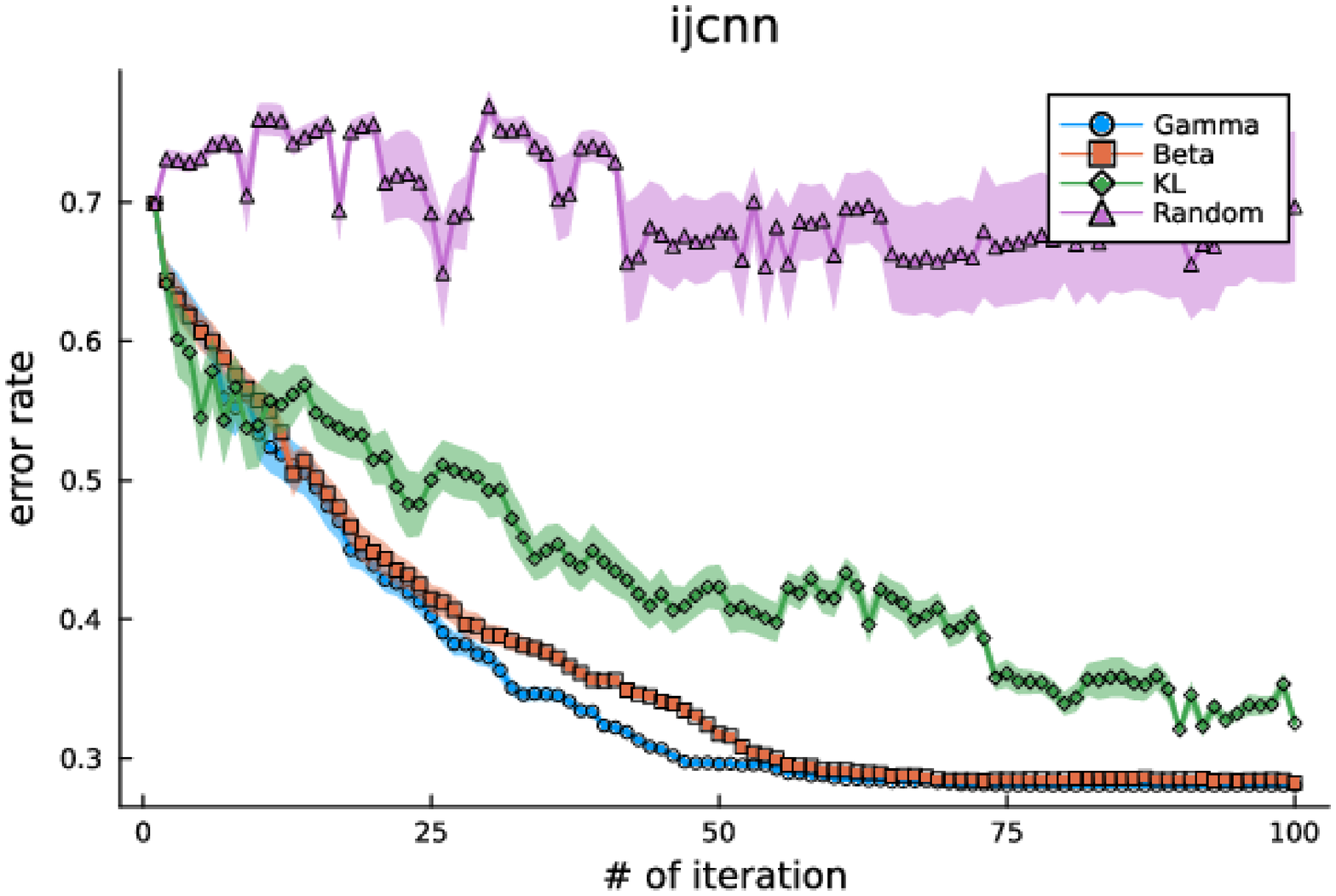}
\includegraphics[scale=0.26]{./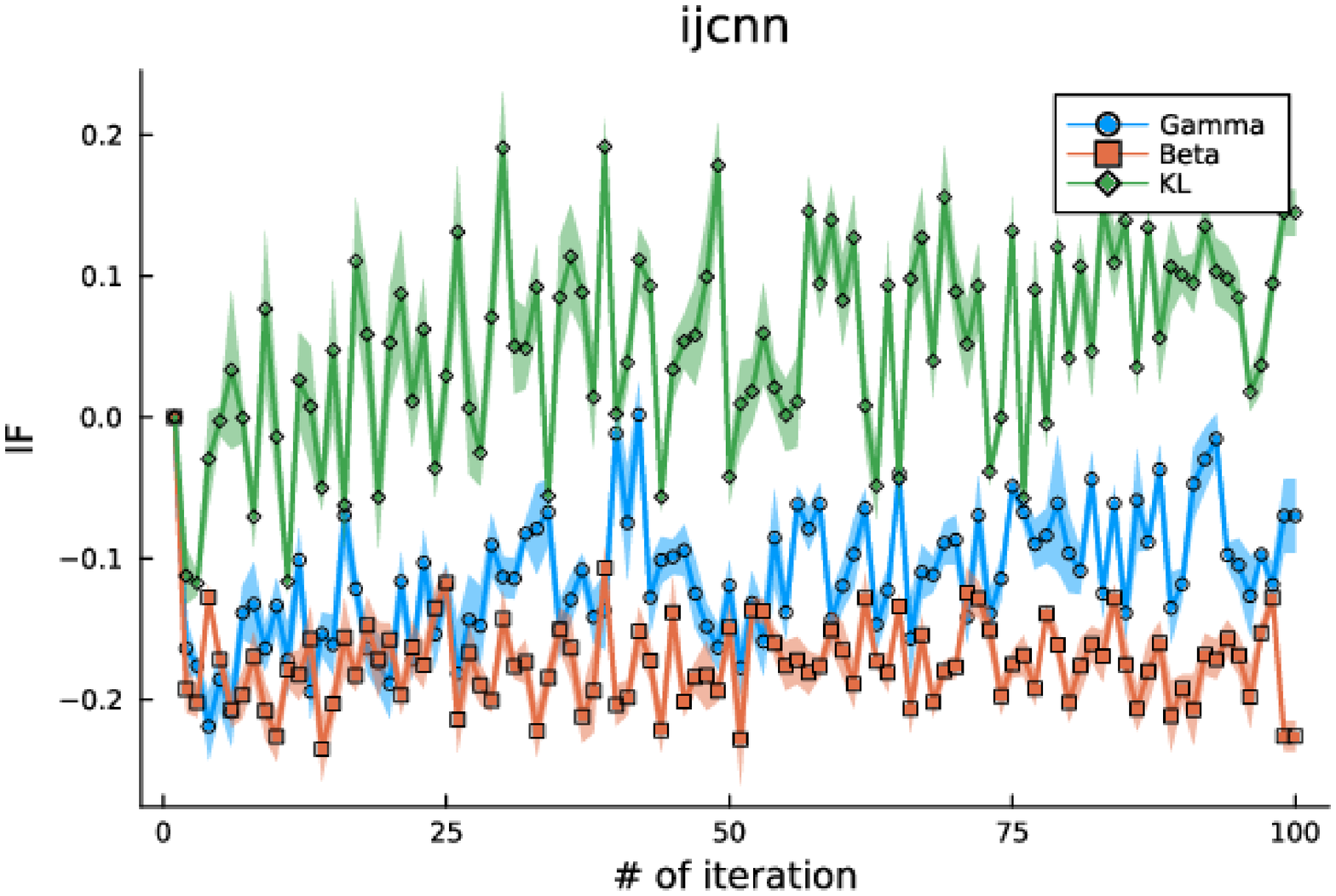}\\
\includegraphics[scale=0.26]{./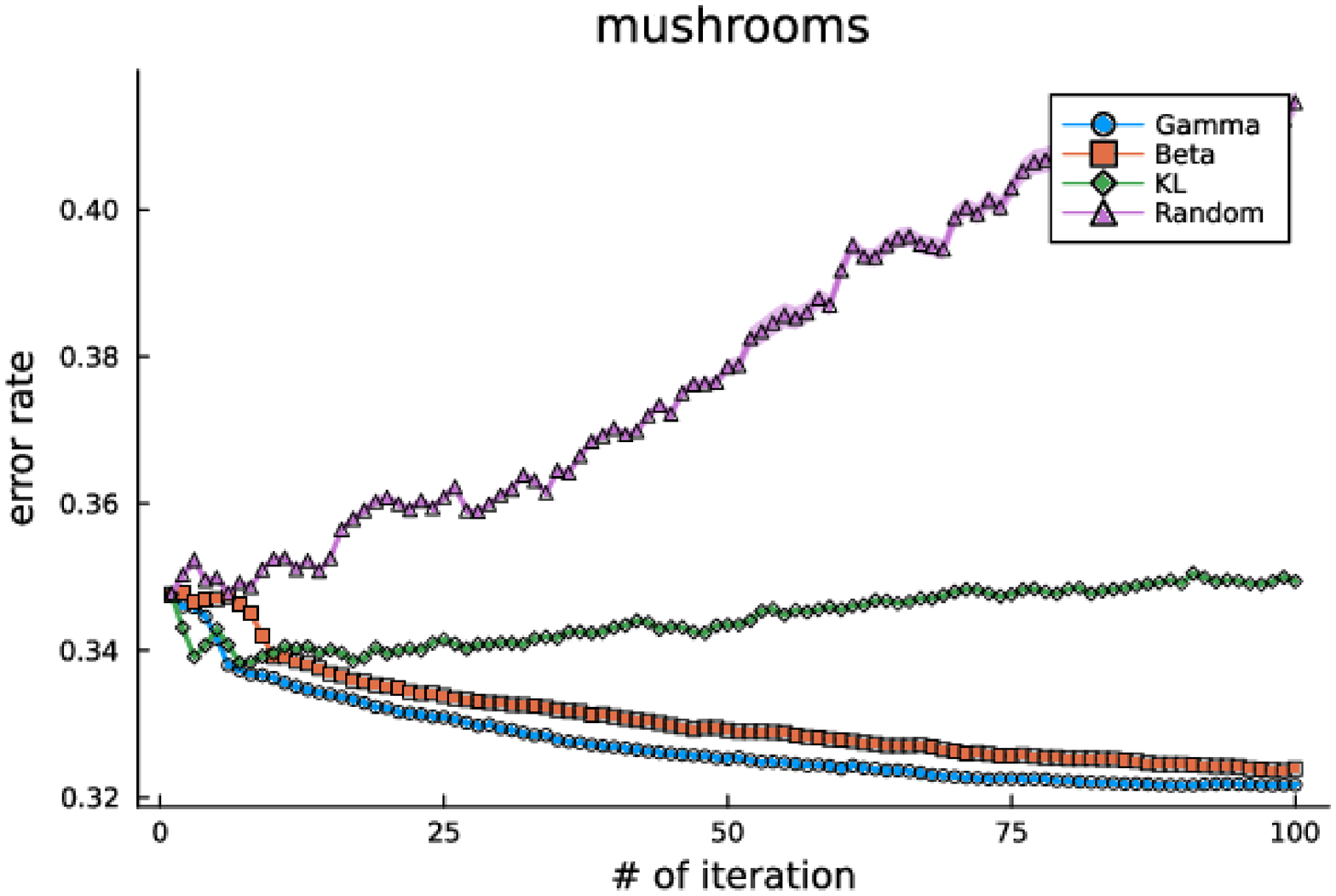}
\includegraphics[scale=0.26]{./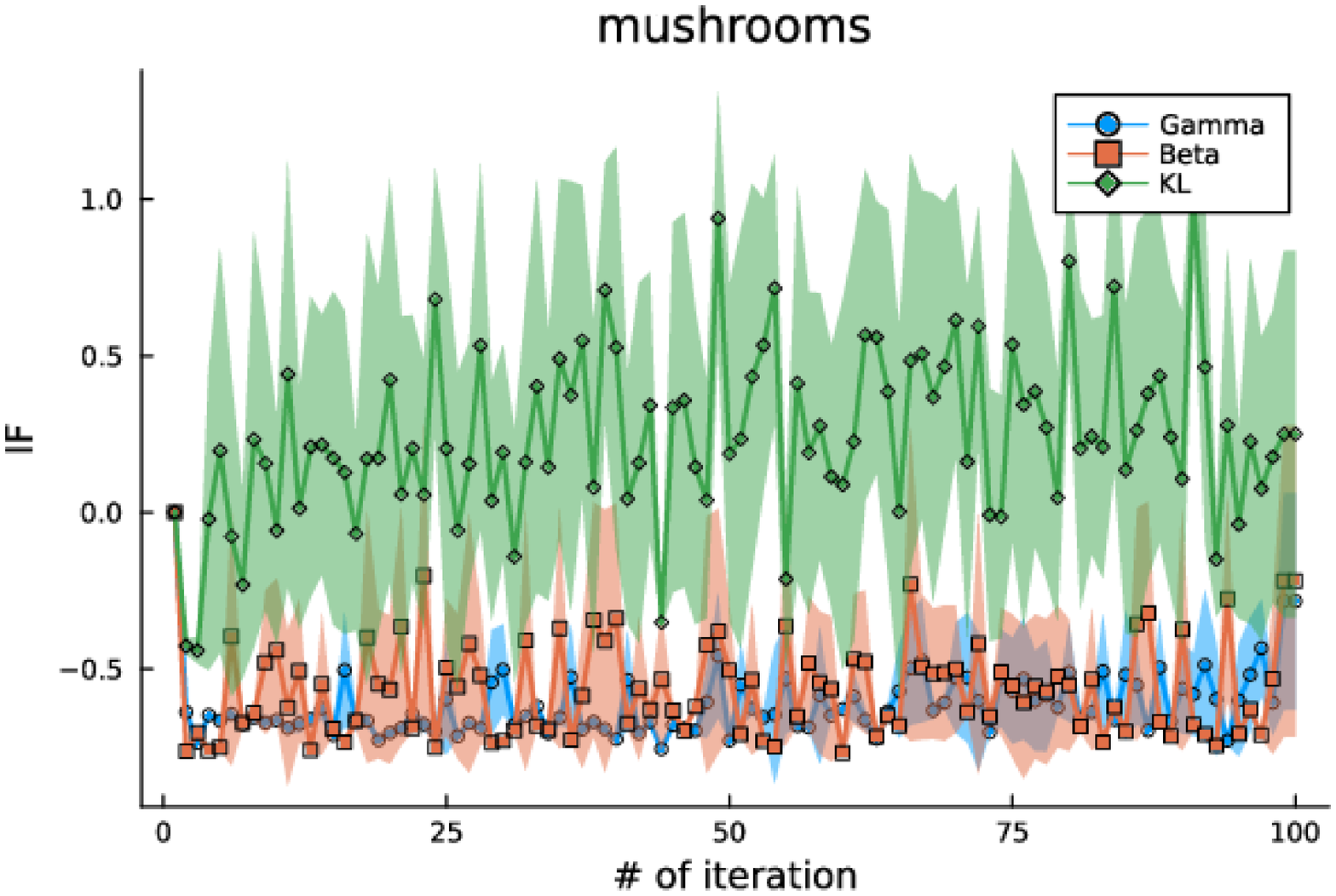}\\
\includegraphics[scale=0.26]{./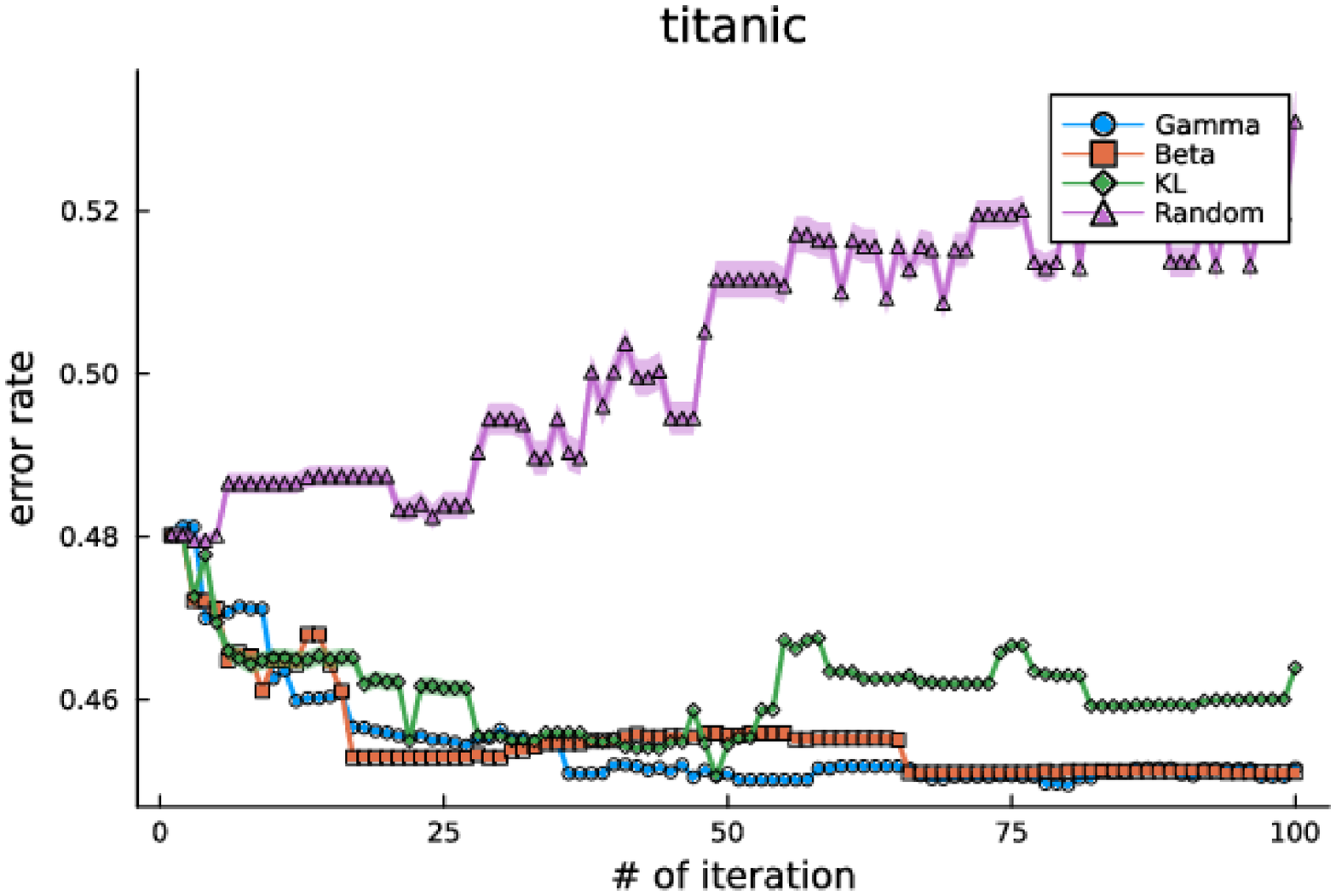}
\includegraphics[scale=0.26]{./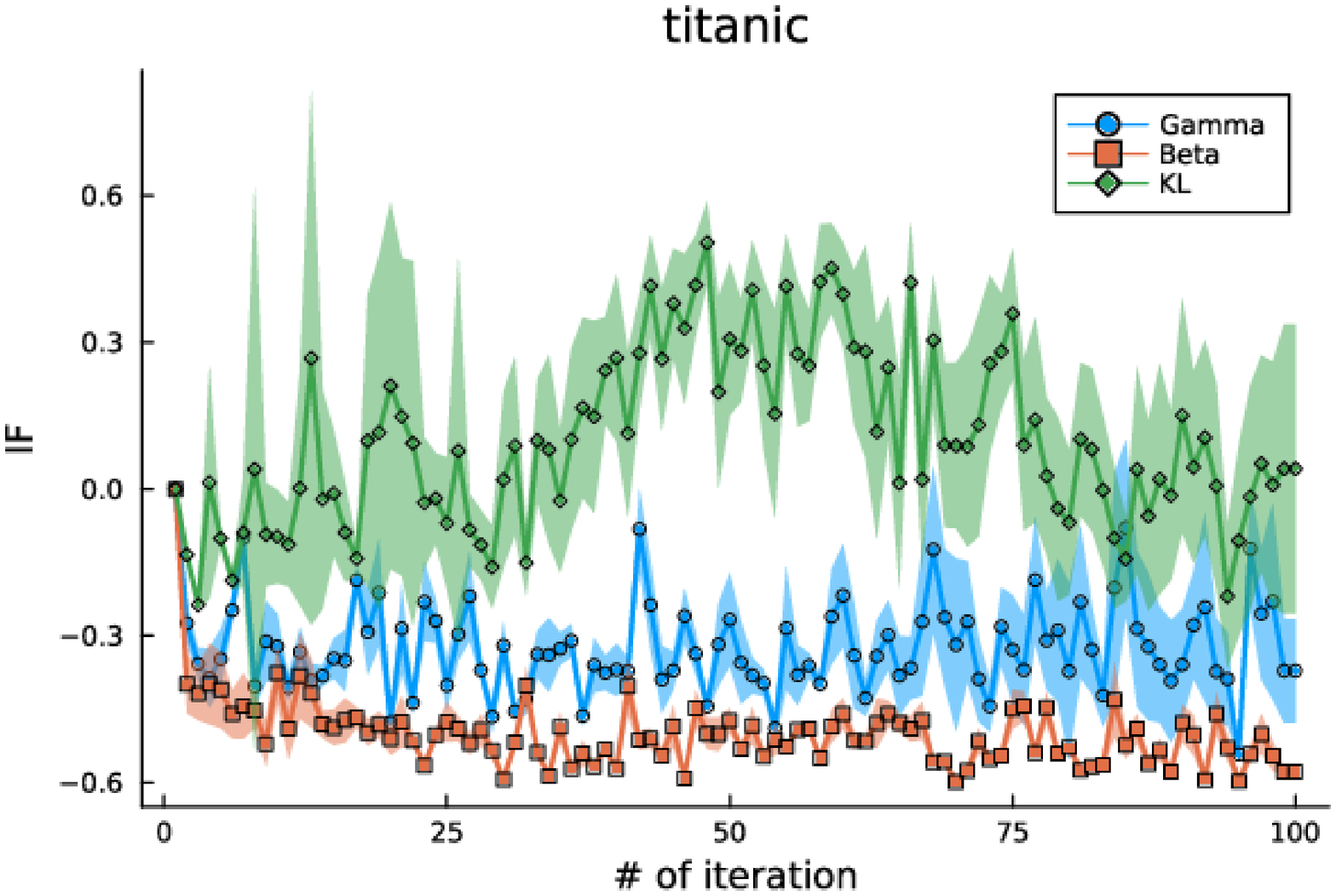}
\caption{Left: Prediction error and the number of acquired data points in four sampling methods for dataset {\tt{ijcnn}}, {\tt{mushrooms}}, {\tt{titanic}}. Right: Influence function values at queried points with $\xi_{\mathrm{out}}$ being the absolute maximum in the pool.}
\label{fig:results2}
\end{figure*}
From left panels of Figs.~\ref{fig:results} and~\ref{fig:results2}, it is seen that the consensus model and the acquisition function derived from the $\beta$ divergence and the dual $\gamma$-power divergence provide smaller prediction errors in many cases. The conventional method with the KL divergence is not stable in the early stage of active learning, whereas the proposed method enables the selection of better samples to measure, which should be attributed to its robustness to a poor committee member in the early stage of active learning. 

We also show the values of the influence function at queried points in the right panels of Figs~\ref{fig:results} and~\ref{fig:results2}. As an outlying point, we find an input point $\bx \in \mathcal{X}_{p}$ from the pool which maximizes the absolute value of $\xi = \lan \bt,\bx \ran$ where $\bt$ is the regression coefficient of the current predictive model. From these figures, it is clearly seen that the value of the influence function for the method based on the KL-divergence is higher than the other two methods, which is consistent with the theoretical result. It is also seen that for the {\tt{adult}} data, the method based on the $\beta$ divergence does not perform well in terms of error rate, and the influence function values are as high as those of the KL divergence-based method.

\section{Conclusion and Discussion}
In this paper, we revisited a classical active learning method QBC. By replacing the KL-divergence with the $\beta$-divergence or the dual $\gamma$-power divergence, we were able to obtain a favorable performance both theoretically and experimentally.

The $u$-mixture for distributions in an exponential family does not have a closed form solution in general; hence, we used an implicit characterization of the $u$-mixture proved in~\cite{muratafujimoto2009}. We also considered the dual $\gamma$-power mixture based on a scale-invariant divergence. It is proven that it has a similar form to the $u$-mixture, but it has an advantage that the normalization factor is always computable; hence, the dual $\gamma$-power mixture is always defined unlike the $u$-mixture. 
When we fix a divergence measure for probability distributions and impose a constant mean condition, we obtain a class of the maximum entropy distributions~\cite{KE2022}. For the KL-divergence, the exponential family is derived as a class of maximum entropy distributions. On the other hand, when we adopt the $u$-divergence, the associated maximum $u$-entropy distribution is of a different form compared with the case of the exponential family, and the consensus model derived using the $u$-mixture of such a class of distributions is easy to obtain by the arithmetic average of the model parameters, as in the case of the consensus model parameter $\bar{\bt}$ for the conventional QBC. In other words, the statistical model and the estimation procedure have a dualistic structure in combination, and the use of the $u$-mixture of the exponential family distributions is in this sense an unnatural procedure. However, this inconsistency of the model and estimation is a source of robustness~\cite{10.1016/j.jmva.2008.02.004}. In our future work, we will explore a more detailed geometric characterization of active learning based on, for example, Pythagoras foliation associated with the Bregman divergence.

The problem of model selection, namely, the appropriate choice of $\beta$ or $\gamma$ parameter, is an interesting open problem. In the literature on robust regression, the optimization of the parameter $\gamma$ for the $\gamma$-power divergence is considered in~\cite{e22040399} via the notions of asymptotic efficiency and breakdown point using the theory of S-estimation. Methods to select an appropriate parameter for robust divergence used in active learning would be explored. 
Another important issue is the adaptive selection of divergence measures. After a sufficient sample has been collected, the KL divergence-based method will work well. Even if we use robust divergence-based methods, it may fail, for example in the case of $\beta$ divergence-based method on adult data. Furthermore, since robust divergence has tuning parameters, its adjustment also affects the performance of active learning. One possible strategy is to collect samples using highly robust methods and parameters in the early stages of active learning. Then, change the divergence measure with the KL divergence-based ones at a later stage. In this case, criteria regarding the number of samples to be collected and divergence selection are necessary and should be considered in conjunction with the parameter selection issue.

\section*{Acknowledgments}
Part of this work is supported by JSPS KAKENHI No. JP18H03211, JP22H03653 and NEDO JPNP18002 and JST CREST No.JPMJCR2015. 

\section*{Data Availability}
Source code is available upon reasonable request.

\appendix 
\section{Proofs}
\subsection{Proof of Theorem~\ref{theo:umix}}
\label{app:A}
For the sake of notational simplicity, we introduce a notation for the norm of a function $f$ as $\lan f \ran = \int f(y) \dd \Lambda (y)$. Then, we have
\begin{align}
\notag
A_{U}(q;w)=&
\sum_c w_c \{ \lan U(\breve{p}_c) \ran - 
\lan U ( \breve{q}) \ran 
-
\lan q, \breve{p}_c - \breve{q} \ran \} \\
\notag
=&
\sum_c w_c \lan U ( \breve{p}_c)\ran - 
\lan U(\breve{q})\ran - 
\lan q , \sum_c w_c \breve{p}_c - \breve{q} \ran \\
\notag
=&
\lan U ( \breve{p}_u ) \ran - \lan U(\breve{q}) \ran 
- \lan q, \breve{p}_u - \breve{q} \ran \\
\notag
&- 
\lan U(\breve{p}_u)\ran + \sum_c w_c \lan U(\breve{p}_c) \ran - b\lan q\ran \\
=&
D_U(q,p_u) - b - \lan U(\breve{p}_u) \ran + \sum_c w_c \lan U(\breve{p}_c)\ran.
\end{align}
In the last equation, only the first term depends on $q$; hence, $q=p_u$ minimizes the weighted sum of the Bregman divergences.

\subsection{Proof of proposition~\ref{prop:consIFbeta}}
\label{app:B}
Consider the KL-divergence with $u(z) = \exp u$ and $u^{\ast}(\zeta) = \log \zeta$. Then, the consensus model is given by the model $p(y;\bar{\xi},w) $ with the parameter $\bar{\xi}= \sum_c w_c \xi_c$. 
Denoting the action of the $\epsilon$-contamination $w_\epsilon$ to a function $f(\xi)$ as 
\begin{align}
    W_{\epsilon}[f] = \int  ((1-\epsilon) w_c \delta(\xi - \xi_c) + \epsilon\delta(\xi - \xi_{\out})) f(\xi) \dd \xi =
    (1-\epsilon) \sum_{c=1}^{C} w_c f(\xi_c) + \epsilon f(\xi_{\out}),
\end{align}
we have
\begin{align}
p(y;\bar{\xi},w(\epsilon)) =&
\exp 
\left( 
\phi^{-1}
 (y W_{\epsilon}[\xi] - \psi (W_{\epsilon}[\xi]))
+c(y,\phi)
\right).
\end{align}
Taking the derivative with respect to $\epsilon$, we have the influence function as 
\begin{align}
    \mathrm{IF}(p(y;\bar{\xi},w),\xi_{\out}) =&
    p(y;\bar{\xi},w)
    \left\{ 
    \frac{y \xi_{\out} - \psi^{\prime}(\xi_{\out})}{\phi} 
    -
    \frac{y \bar{\xi} - \sum_c w_c \psi^{\prime}(\xi_c)}{\phi}
    \right\}.
\end{align}
Hence, the influence function is unbounded for $\xi_{\out}$. On the other hand, consider the $\beta$-divergence with $u(z) = (\beta z +1)^{1/\beta}$ and $u^{\ast}(\zeta) = \frac{\zeta^{\beta} - 1}{\beta}$. Then, the minimizer $p_u(y;w(\epsilon))$ of $A_U(q;w(\epsilon))$ is 
\begin{align}
     p_u(y;w(\epsilon)) = &
     \left[
     W_{\epsilon}[
     \exp 
     \left\{ 
     \phi^{-1} \beta ( y \xi_c - \psi(\xi_c)) + \beta c(y,\phi)
     \right\}
     ]
     -\beta b(w(\epsilon))
     \right]^{1/\beta}.
\end{align}
Taking the derivative with respect to $\epsilon$, we have 
the influence function as
\begin{align} \notag
    \mathrm{IF}(p_u(y;w);\xi_{\out}) =&
\frac{1}{\beta p_u^{1-\frac{1}{\beta}}}
\left[
\int \{ \delta(\xi - \xi_{\out}) - \sum_{c} w_c \delta(\xi - \xi_c)\}
\exp 
\{
\phi^{-1} \beta (y \xi - \psi(\xi)) + \beta c(y,\phi)
\}\dd \xi -\beta b
\right]\\
=&
\frac{1}{\beta p_u^{1-\frac{1}{\beta}}} \{ p_{\out}^\beta - \overline{p^{\beta}}\},
\end{align}
where we used simplified notations $p_c = p(y;\xi_c)$, $p_{\out} = p(y|\xi_{\out})$ and 
\begin{align}
\overline{p^{\beta}} = \sum_c w_c p_c^{\beta}. 
\end{align}
We conclude that if $\beta >0$, then there exists a limit of $ \mathrm{IF}(p_u(y;w),\xi_{\out})$ when $|\xi_{\out}| \to \infty$.

\subsection{Proof of proposition~\ref{prop:gammamix}}
\label{app:A3}
A direct observation gives that
\begin{align} \notag
A_\gamma (q)-A_\gamma (p_\gamma (\cdot;w)) =&
- \frac{\int q\sum_c w_cp_c^\gamma{\rm d}\Lambda}{(\int q^{\gamma+1}{\rm d}\Lambda)^{\frac{1}{\gamma+1}}}
+ \frac{\int p_\gamma (\cdot;w) \sum_c w_c p_c^\gamma{\rm d}\Lambda}{(\int 
p_\gamma(\cdot;w)^{\gamma+1}{\rm d}\Lambda)^{\frac{1}{\gamma+1}}}\\ \notag
=&- \frac{z(w)^\gamma\int q\>p_\gamma(\cdot;w)^{\gamma}{\rm d}\Lambda}{(\int q^{\gamma+1}{\rm d}\Lambda)^{\frac{1}{\gamma+1}}}
+z(w)^\gamma\bigg(\int p_\gamma(\cdot;w)^{\gamma+1}{\rm d}\Lambda\bigg)^{\frac{\gamma}{\gamma+1}}
\end{align}
which is equal to $z(w)^\gamma D_\gamma(q,p_\gamma(\cdot;w))$.
This leads to the conclusion that $A_\gamma(q)\geq A_\gamma(p_\gamma(\cdot;w)) $, and the equality
holds if and only if $q=p_\gamma(\cdot;w)$.

\subsection{Proof of proposition~\ref{prop:consIFgamma}}
\label{app:A4}
The consensus model with the dual $\gamma$-power divergence is given by
\begin{align}
    p_{\gamma}(y;w) =& \frac{1}{z(w)} \left( \sum_c w_c p_c(y)^{\gamma}\right)^{\frac{1}{\gamma}} =
    \frac{1}{z(w)} \left( \overline{p(y)^{\gamma}}  \right)^{\frac{1}{\gamma}},\\
    z(w) =& \int 
    \left(
    \overline{p(y)^{\gamma}}
    \right)^{\frac{1}{\gamma}} \dd y.
\end{align}
The $\epsilon$-contamination model is 
\begin{align}
     p_{\gamma}(y;w(\epsilon)) =&
\frac{ 
\left(W_{\epsilon}[p^{\gamma}(y;\xi)]\right)^{\frac{1}{\gamma}}
}
{
\int
\left(W_{\epsilon}[p^{\gamma}(y;\xi)]\right)^{\frac{1}{\gamma}}
}
\dd \Lambda(y).
\end{align}
The derivative of $\left(W_{\epsilon}[p^{\gamma}(y;\xi)]\right)^{\frac{1}{\gamma}}$ is 
\begin{align}
    \frac{\partial \left(W_{\epsilon}[p^{\gamma}(y;\xi)]\right)^{\frac{1}{\gamma}}}{\partial \epsilon} =&
    p(y;\xi_{\out})^{\gamma} - \sum_c w_c p_c(y)^{\gamma},
\end{align}
and the derivative of the normalizing factor is 
\begin{align}
    \left. \frac{\partial z(w(\epsilon))}{\partial \epsilon}\right|_{\epsilon = 0} =&
    \int 
    \left\{ 
    p_{\out}^{\gamma} - \overline{p^{\gamma}}
    \right\} \dd \Lambda.
\end{align}
Then, 
\begin{align}
\notag
 {\rm IF}( p_\gamma(y), \xi_{\out})=&
\frac{1}{\gamma z(w)}
( p_{\out}^{\gamma} - \overline{p^{\gamma}})
\overline{p^{\gamma}}^{ \frac{1}{\gamma} -1}  \\ \notag
-&
\frac{1}{z^2(w)} \int (p_{\out}^{\gamma}-\overline{p^{\gamma}}) \dd \Lambda 
\overline{p^{\gamma}}^{\frac{1}{\gamma}} \\
=&
\frac{1}{\gamma} 
\frac{p_{\gamma}(y;w)}{\overline{p(y)^{\gamma}}}
(p_{\out}(y)^{\gamma} - \overline{p(y)^{\gamma}}) - 
\frac{p_{\gamma}(y;w)}{z(w)}
\int 
(p_{\out}(y)^{\gamma} - \overline{p(y)^{\gamma}}) \dd \Lambda(y).
\end{align}
We conclude that if $\gamma>0$, then there exists a limit of ${\rm IF}( q_\gamma, \xi_{\rm out})$ when $\xi_{\rm out}$ goes to $\infty$ or $-\infty$.
This is because $p(y;\xi_{\rm out})^\gamma$ converges to one of singular density functions $p_{\pm\infty}(y)^\gamma$ as 
$\xi_{\rm out}$ goes to $\infty$ or $-\infty$. 
On the other hand, if $\gamma\leq0$, the influence function becomes unbounded in the outlier.

\subsection{Proof of proposition~\ref{prop:IFac}}
\label{app:A5}
In the proof, we assume $\phi = 1$ and $c(y,\phi) = 0$ for the exponential family for the sake of simplicity. We also omit $\bx$ from the acquisition function. 
We first consider the case with the KL-divergence. The $\epsilon$-contamination for the acquisition function is denoted as
\begin{align} 
    a_0(\bx,w(\epsilon)) =&
    W_{\epsilon}[ D_0(p(y;\xi), p(y; W_{\epsilon}[\xi]))].
\end{align}
Taking the derivative of $a_0(\bx,w(\epsilon))$ with respect to $\epsilon$ and setting $\epsilon \to 0$, we have
\begin{align} \notag
    {\mathrm{IF}}(a_0(\bx;w),\xi_{\out}) =&
    \int p_{\out} (y \xi_{\out} - \psi(\xi_\out)) \dd \Lambda 
    -
    \int p_{\out} ( y \bar{\xi}-\psi(\bar{\xi})) \dd \Lambda \\ \notag 
    &- \sum_c w_c \int p_{c}(y\xi_c - \psi(\xi_c)) \dd \Lambda + 
    \sum_{c} w_c \int p_c (y \bar{\xi} - \psi(\bar{\xi})) \dd \Lambda \\ \notag 
    &- \sum_c w_c 
    \int p_c ( y \xi_{\out} - \psi^{\prime}(\xi_\out) - y \bar{\xi}+\psi^{\prime}(\bar{\xi})) \dd \Lambda \\  \notag
    =& 
    \left\{
    \psi^{\prime}(\xi_{\out}) - \bar{\xi} - \sum_c w_c \psi^{\prime}(\xi_c)
    \right\} \xi_{\out} \\ \notag
    &- \psi(\xi_{\out}) + \psi^{\prime}(\xi_{\out}) + 
    2 \sum_c w_c \psi^{\prime}(\xi_c) \bar{\xi}\\
    &- \sum_{c} w_c \psi^{\prime}(\xi_c) \xi_c + \sum_c w_c \psi(\xi_c) - \psi^{\prime}(\bar{\xi}).
\end{align}
Hence, the influence function is unbounded for $\xi_{\out}$.

Consider the influence function for 
\begin{align} \notag
a_{\beta}(\bx;w) =& \sum_c w_c D_{\beta}(p_c,p_u)
\\
=&
\frac{1}{\beta+1} \int p_u^{\beta+1}\dd \Lambda 
+
\sum_c w_c 
\left[
\frac{1}{\beta (\beta+1)}
\int p^{\beta+1}_c \dd \Lambda - 
\frac{1}{\beta} 
\int p_c p_u^{\beta} \dd \Lambda
\right],
\end{align}
where    
\begin{align}
    p_u(y;w) =
    \left[
    \sum_c w_c \exp (\beta (y \xi_c - \psi(\xi_c)) - \beta b
    \right]^{\frac{1}{\beta}}.
\end{align}

Taking the derivative of $a_{\beta}(\bx,w(\epsilon))$ with respect to $\epsilon$ and setting $\epsilon \to 0$, we have
\begin{align} \notag
{\mathrm{IF}}(a_{\beta}(\bx,w),\xi_{\out}) =&
\int 
\frac{1}{\beta} p_u^{\beta + \frac{1}{\beta} -1}
\left\{
p_{\out}^{\beta} - \overline{p^{\beta}}
\right\} \dd \Lambda 
-
\frac{1}{\beta} \int \overline{p^1} p_u^{\beta + \frac{1}{\beta} -2} 
(p_{\out}^{\beta} - \overline{p^{\beta}})\dd \Lambda \\ \notag
&+
\frac{1}{\beta (\beta+1)}
\int p_{\out}^{\beta +1} \dd \Lambda - \frac{1}{\beta} \int p_{\out} p_u^{\beta} \dd \Lambda \\
&- 
\frac{1}{\beta (\beta+1)}
\int \overline{p^{\beta+1}} \dd \Lambda +
\frac{1}{\beta} \int \overline{p^{1}} p_{u}^{\beta} \dd \Lambda.
\end{align}
We conclude that if $\beta >0$, then there exists a limit of $ \mathrm{IF}(a_u(\bx;w),\xi_{\out})$ when $|\xi_{\out}| \to \infty$.

For the dual $\gamma$-power divergence, the acquisition function is of the form
\begin{align} \notag
    a_{\gamma}(\bx;w) =&
    \sum_c w_c D^{\ast}_{\gamma}(p_c,p_{\gamma}) \\
    =&
    \sum_{c} w_c
    \left\{
    - 
    \frac{ \int p_c p_{\gamma}^{\gamma} \dd \Lambda}{
    \left(
    p_c^{\gamma +1} \dd \Lambda
    \right)^{\frac{1}{\gamma +1}}
    }
    +
    \left(
    \int p_{\gamma}^{\gamma+1}\dd \Lambda
    \right)^{\frac{\gamma}{\gamma+1}}
    \right\}.
\end{align}
The derivative of the $\epsilon$-contaminated model is
\begin{align}
    \left. \frac{\partial p_{\gamma}}{\partial \epsilon}\right|_{\epsilon =0} =
    \frac{1}{\gamma} \frac{p_{\gamma}}{\overline{p^{\gamma}}}
    (p_{\out}^{\gamma} - \overline{p^{\gamma}}) - 
    \frac{1}{z(w)} p_{\gamma} \int (p_{\out}^{\gamma} - \overline{p^{\gamma}}) \dd \Lambda.
\end{align}
Then, the influence function is
\begin{align} \notag
    {\mathrm{IF}}(a_{\gamma}(\bx;w),\xi_{\out}) 
    =&
    -
    \frac{
    \int p_{\out} p^{\gamma}_{\gamma} \dd \Lambda
    }{
    \left(
    \int p_{\out}^{\gamma+1} \dd \Lambda
    \right)^{\frac{1}{\gamma +1}}
    }
    +
    2
    \sum_{c} w_c 
    \frac{
    \int p_c p^{\gamma}_{\gamma} \dd \Lambda
    }{
    \left(
    \int p_c^{\gamma+1} \dd \Lambda
    \right)^{\frac{1}{\gamma +1}}
    } \\
    \notag 
    &- 
    \sum_c w_c \left( \int p_c^{\gamma+1} \dd \Lambda \right)^{-\frac{1}{\gamma+1}}
    \int 
    \frac{p_c p_{\gamma}^{\gamma}}{ \overline{p^{\gamma}}}
    p_{\out}^{\gamma} \dd \Lambda \\ \notag 
    &+
    \frac{1}{z(w)} p_{\gamma} 
    \int (p^{\gamma}_{\out} - \overline{p^{\gamma}} ) \dd \Lambda 
    \sum_c w_c \left(\int p_c^{\gamma+1} \dd \Lambda \right)^{- \frac{1}{\gamma+1}} \\
    &+
    \frac{p_{\gamma}^{\gamma+1}}{\overline{p^{\gamma}}} 
    (
    p_{\out}^{\gamma} - \overline{p^{\gamma}}
    ) \dd \Lambda 
    -
    \frac{\gamma}{z(w)} p_{\gamma} \int (p_{\out}^{\gamma} - \overline{p^{\gamma}}) \dd \Lambda.
\end{align}
We conclude that if $\gamma>0$, then there exists a limit of ${\rm IF}( a_\gamma(\bx;w), \xi_{\rm out})$ when $\xi_{\rm out}$ goes to $\infty$ or $-\infty$.

\end{document}